\def\eqref#1{equation~\ref{#1}}
\def\1{\bm{1}}
\def\vx{{\bm{x}}}
\DeclareMathAlphabet{\mathsfit}{\encodingdefault}{\sfdefault}{m}{sl}
\SetMathAlphabet{\mathsfit}{bold}{\encodingdefault}{\sfdefault}{bx}{n}
\def\gA{{\mathcal{A}}}
\def\gE{{\mathcal{E}}}
\def\gL{{\mathcal{L}}}
\def\gR{{\mathcal{R}}}
\def\gS{{\mathcal{S}}}
\def\gT{{\mathcal{T}}}
\def\sA{{\mathbb{A}}}
\def\sD{{\mathbb{D}}}
\def\sS{{\mathbb{S}}}
\newcommand{\E}{\mathbb{E}}
\newcommand{\R}{\mathbb{R}}
\newcommand{\KL}{D_{\mathrm{KL}}}
\newcommand{\Var}{\mathrm{Var}}
\newcommand{\Cov}{\mathrm{Cov}}
\pgfplotsset{compat=1.14}
\newcommand{\Bias}{\mathrm{Bias}}
\begin{document}
	\title{Ensemble and Auxiliary Tasks for Data-Efficient Deep Reinforcement Learning}
	\titlerunning{Ensemble and Auxiliary Tasks for Data-Efficient Deep RL}
	%
	\author{Muhammad Rizki Maulana\orcidID{0000-0002-3457-2563} (\Letter) \and \\
		Wee Sun Lee\orcidID{0000-0002-0988-2500}}
	\authorrunning{M. R. Maulana \& W. S. Lee}
	%
	\institute{
		School of Computing, National University of Singapore, Singapore\\
		\email{rizki@u.nus.edu, leews@comp.nus.edu.sg}
	}
	%
	
	\toctitle{Ensemble and Auxiliary Tasks for Data-Efficient Deep Reinforcement Learning}
	\tocauthor{Muhammad~Rizki~Maulana, Wee~Sun~Lee}
	
	\maketitle              
	\begin{abstract}
		Ensemble and auxiliary tasks are both well known to improve the performance of machine learning models when data is limited. However, the interaction between these two methods is not well studied, particularly in the context of deep reinforcement learning. In this paper, we study the effects of ensemble and auxiliary tasks when combined with the deep Q-learning algorithm. We perform a case study on ATARI games under limited data constraint. Moreover, we derive a refined bias-variance-covariance decomposition to analyze the different ways of learning ensembles and using auxiliary tasks, and use the analysis to help provide some understanding of the case study. Our code is open source and available at \url{https://github.com/NUS-LID/RENAULT}. 
		
		\keywords{Deep reinforcement learning  \and ensemble learning \and multi-task learning.}
	\end{abstract}
	%
	%
	%
	
	\thispagestyle{specialfooter}
	\section{Introduction}
	Ensemble learning is a powerful technique to improve the performance of machine learning models on a diverse set of problems. 
	In reinforcement learning (RL), ensembles are mostly used to stabilize learning and reduce variability \cite{anschel2017averaged,chua2018deep,kurutach2018model}, and in few cases, to enable exploration \cite{osband2016deep,chen2017ucb}. Orthogonal to the utilization of ensembles, auxiliary tasks also enjoy widespread use in RL to aid learning \cite{jaderberg2016reinforcement,laskin2020curl,mirowski2016learning,kartal2019terminal}. 
	
	The interplay between these two methods has been studied within a limited capacity in the context of simple neural networks \cite{ye2006improving} and decision trees \cite{wang2008mtforest}. In reinforcement learning, this interaction -- to the best of our knowledge -- has not been studied at all. 
	
	Our principal aim in this work is to study ensembles and auxiliary tasks in the context of deep Q-learning algorithm. Specifically, we apply ensemble learning on the well-established Rainbow agent \cite{hessel2018rainbow,van2019use}, and additionally augment it with auxiliary tasks. 
	We study the problem theoretically through the use of bias-variance-covariance analysis and by performing an empirical case study on a popular reinforcement learning benchmark, ATARI games, under the constraint of low number of interactions \cite{kaiser2019model}. ATARI games offer a suite of diverse problems, improving the generality of the results, and data scarcity makes the effect of ensembles more pronounced. Moreover, under the constraint of low data, it is naturally preferable to trade-off the extra computational requirement of using an ensemble for the performance gain that it provides. 
	
	We derive a more refined analysis of the bias-variance-covariance decomposition for ensembles. The usual analysis assumes that each member of the ensemble is trained on a different dataset. Instead, we focus our analysis on a single dataset, used with multiple instantiations of a randomized learning algorithm. This is commonly how ensembles are actually used in practice; in fact, the multiple datasets that are used for training members of the ensemble are often constructed from a single dataset through the use of randomization.  Additionally, we introduce some new ``weak" auxiliary tasks that provide small improvements, based on model learning and learning properties of objects and events. We show how ensembles can be used for combining multiple "weak" auxiliary tasks to provide stronger improvements. 
	
	Our case study and analysis shows that,
	\begin{itemize}
		
		\item Independent training of ensemble members works well. Joint training of the entire ensemble reduces Q-learning error but, surprisingly did not perform as well as an independently trained ensemble.
		
		\item The new auxiliary tasks are ``weakly" helpful. Combining them together using an ensemble can provide a significant performance boost. We observe reduction in variance and covariance 
		with the use of auxiliary tasks in the ensemble.
		
		\item Despite their benefits, using all auxiliary tasks on each predictor in ensemble may results in poorer performance. Analysis indicates that this could cause higher bias and covariance due to loss of diversity. 
		
	\end{itemize}
	
	It is interesting to note that our ensemble, despite its simplicity, achieves better performance on 13 out of 26 games compared to recent previous works. Moreover, our ensemble with auxiliary tasks achieves significantly better human mean and median normalized performance; $1.6\times$ and $1.55\times$ better than data-efficient Rainbow \cite{van2019use}, respectively.
	
	\section{Related Works}
	
	\noindent
	\textbf{Reinforcement Learning and auxiliary tasks. } Rainbow DQN \cite{hessel2018rainbow} combines multiple important advances in DQN \cite{mnih2015human} such as learning value distribution \cite{bellemare2017distributional} and prioritizing experience \cite{schaul2015prioritized} to improve performance. Other works try to augment RL by devising useful auxiliary tasks. \cite{jaderberg2016reinforcement} proposed auxiliary tasks in the form of reward prediction as a classification of positive, negative, or neutral reward. \cite{jaderberg2016reinforcement} also proposed to predict changing pixels for the downsampled image. Recently, \cite{laskin2020curl} proposed the use of contrastive loss to learn better representation. 
	Other auxiliary tasks have been explored as well, such as depth prediction \cite{mirowski2016learning} and terminal prediction \cite{kartal2019terminal}. These auxiliary tasks are less general; they require domain with 3D inputs and problem with episodic nature, respectively. Although much research has been done with auxiliary tasks in RL, to the best of our knowledge, none of them investigated the use of auxiliary tasks in the context of ensemble RL.
	
	\noindent
	\textbf{Ensemble in Reinforcement Learning. } Ensemble methods have been explored in RL for various purposes \cite{anschel2017averaged,osband2016deep,chua2018deep,kurutach2018model}. \cite{anschel2017averaged} investigated the effect of ensemble in RL, especially pertaining to the reduction of target approximation error. In the model-based RL, \cite{chua2018deep} used ensemble to reduce modelling errors, and \cite{kurutach2018model} accelerated policy learning by generating experiences through ensemble of dynamic models. In the context of policy gradients, \cite{fujimoto2018addressing} utilized ensemble value function as a critique to reduce function approximation error. \cite{osband2016deep} proposed the use of ensemble for exploration by training an ensemble based on bootstrap with random initialization and randomly sampled policy from the ensemble. \cite{chen2017ucb} extended the idea and replaced the policy sampling with UCB. Finally, \cite{lee2020sunrise} proposed to combine ensemble bootstrap with random initialization \cite{osband2016deep}, weighted Bellman backup, and UCB \cite{chen2017ucb}. While they also studied the ensemble in the similar context, they did not attempt to explain the gain afforded by the ensemble, nor did they studied the effect of combining ensemble with auxiliary tasks. 
	
	\section{Background}
	
	\subsection{Markov Decision Process and RL}
	A sequential decision problem is often modeled as a Markov Decision Process (MDP). An MDP is defined with a 5-tuple $<\sS,\sA,R,T,\gamma>$ where $\sS$ and $\sA$ denote the set of states and actions, $R$ and $T$ represent the reward and transition functions, and $\gamma \in [0,1)$ is a discount factor of the MDP. Reinforcement Learning aims to find an optimal solution of a decision problem of unknown MDP. One of the well known model-free RL algorithms is Deep Q Learning (DQN) \cite{mnih2015human}, which learns a state-action ($s$,$a$) value function $Q(s,a;\theta)$ with neural networks parameterized by $\theta$. The Q-function is used to select action when interacting with environment; of which the experience is accumulated in the replay buffer $\sD$ for learning. 
	We refer the reader to Appendix \ref{appendix:background} for more details about MDP and RL.
	
	\subsection{Rainbow Agent}
	The Rainbow agent \cite{hessel2018rainbow} extends the DQN by introducing various advances in reinforcement learning. It uses Double-DQN \cite{van2016deep} to minimize the overestimation error. Instead of sampling uniformly from the replay buffer $\sD$, it assigns priority to each instance based on the temporal difference (TD) error \cite{schaul2015prioritized}. Its architecture decomposes advantage function from value function $Q(s,a) = V(s) + A(s,a)$ and learn them in an end-to-end manner \cite{wang2016dueling}. Moreover, categorical value distribution is learned in place of the expected state-action value function \cite{bellemare2017distributional}. 
	Thus, the loss function is given as follows.
	We denote the scalar-valued Q-function corresponding to the the distributional Q-function as $\hat{Q}$ for simplicity.
	\begin{align}
	\gL(\theta) &= \E\big[\KL[g_{s,a,r,s'}||Q(s,a;\theta)]\big]\\ \label{eq/Q-loss-distributional}
	g_{s,a,r,s'} &= \Phi_{Q(s',\hat{a}';\theta')} \Big(r + \gamma \gS \Big) \\
	\hat{a}' &= \arg \max_{a'} \hat{Q}(s',a';\theta)
	\end{align}
	where $\Phi_{Q(s',\hat{a}';\theta')}$ denotes a distributional projection \cite{bellemare2017distributional} based on categorical atom probabilities given by $Q(s',\hat{a}';\theta')$ for support $\gS$. $Q$ returns a column vector Softmax output with $|\gS|$ rows instead of a scalar value and $\gS$ is a column vector support of the categorical distribution. The scalar-valued Q function is computed by $\hat{Q}(s,a) = \gS^T Q(s,a)$. We refer the reader to the original paper \cite{bellemare2017distributional} for a detailed explanation.
	
	Multi-step returns is also employed to achieve faster convergence \cite{sutton2011reinforcement}. To aid exploration, NoisyNets \cite{fortunato2018noisy} is utilized; it works by perturbing the parameter space of the Q-function by injecting learnable Gaussian noise.
	
	Recently, \cite{van2019use} proposes a set of hyperparameters for Rainbow that works well on ATARI games under 100K interactions \cite{kaiser2019model}.

	\section{Rainbow Ensemble} \label{rainbow-ensemble}
	
	Several forms of ensemble agents have been proposed in the literature \cite{anschel2017averaged,osband2016deep,lee2020sunrise} with different variations of complexities. For ease of analysis, we propose to use a simple ensemble similar to Ensemble DQN \cite{anschel2017averaged}. The original Ensemble DQN was not combined with the recent advances in DQN such as distributional value function \cite{bellemare2017distributional}, prioritized experience replay \cite{schaul2015prioritized}, and NoisyNets \cite{fortunato2018noisy}. Here, we describe our ensemble, that we call \textit{REN} (Rainbow ENsemble), which combines a simple ensemble approach with modern DQN advances in Rainbow.
	
	REN is based on the following simple ensemble estimator:
	\begin{align}
	Q_{ens}^{(M)}(s,a) = \sum_{m=1}^M \frac{1}{M} Q_m(s,a;\theta_m) \label{eq/Ensemble*}
	\end{align}
	where $\theta_m$ is the parameter for the $m$-th Q function. It outputs a distributional estimate of the Q-function. The scalar-valued ensemble Q-function is given by $\hat{Q}_{ens}^{(M)}(s,a) = \gS^T Q_{ens}^{(M)}(s,a)$. We use this Q-function as our policy by taking the action that maximizes this state-action function,
	\begin{align}
	a &= \arg \max_a \hat{Q}_{ens}^{(M)}(s,a)
	\end{align}
	The ensemble distributional Q-function is used to compute the TD target $g_{s,a,r,s'}$ following \eqref{eq/Q-loss-distributional}, thus allowing reduction in the target approximation error (TAE) due to averaging \cite{anschel2017averaged}. The agent learns the estimator by optimizing the following loss,
	\begin{align}
	\gL(\theta_m) &= \E_{D_m}\big[ \KL[g_{s,a,r,s'}||Q(s,a;\theta_m) ] \big] \label{eq/Ensemble*-loss}
	\end{align}
	where $m$ is the index of the member of the ensemble and $\sD_m$ is the dataset (buffer) for the $m$-th member. The loss is computed for each member $m$ and optimized independently.
	
	Rainbow uses prioritized experience replay which assigns priority to each transition. This allows for important transitions -- transitions with higher error -- to be sampled more frequently. Since REN consists of multiple members, we adopt the use of multiple prioritized experience replay for each member; i.e., for each member $m$, we have a prioritized replay $\sD_m$ with priority updated following the loss $\gL(\theta_m)$. This allows each member to individually adjust their priority based on their individual errors, thus potentially enabling the ensemble to perform better. 
	
	Finally, NoisyNets is used for exploration in Rainbow. It takes the form of stochastic layers in the Q-function; each stochastic layer samples its parameters from a distribution of parameters modeled as a learnable Gaussian distribution. In our case, we have $M$ Q-functions, with each containing the stochastic layers of NoisyNets. 
	
	\section{Auxiliary Tasks for Ensemble RL}
	
	Auxiliary tasks have often been shown to improve performance in learning problems. However, the combination of auxiliary tasks and ensembles has not been extensively studied, particularly in the context of reinforcement learning.
	
	We use the following framework, where each member of the ensemble can be trained together with a different set of auxiliary tasks, for combining ensembles with auxiliary tasks. Let $\gT_m = \{t_{m,1}, ..., t_{m,N_m}\}$ be the set of auxiliary tasks for member $m$, we seek to optimize the following loss,
	\begin{align}
	\gL_A(\theta_m) &= \gL(\theta_m) + \E\Big[ \sum_{n=1}^{N_m} \alpha_{m,n}  \gL_{t_{m,n}}(\theta_m) \Big] \label{eq/aux-loss}
	\end{align}
	where $\alpha_{m,n}$ is the strength parameter and $\gL_{t_{m,n}}(\theta_m)$ is the auxiliary task's specific loss for task $t_{m,n}$. Each task may additionally includes a set of parameters that will be optimized jointly with the parameters of the member.
	
	Some questions immediately arise. Should every auxiliary tasks be used with every member of the ensemble? The other extreme would be using a single distinct auxiliary task with each member of the ensemble. If each auxiliary task is \emph{weak} in the sense of only providing a small improvement, can they be combined in the ensemble to provide much stronger improvements? We examine some of these questions in our analysis and case study.
	
	The framework can be viewed as the generalization of MTLE \cite{ye2006improving} and MTForest \cite{wang2008mtforest}, where each member of the ensemble is trained with the auxiliary task of predicting the value of a distinct component of the input vector. An instantiation of this framework with REN as the ensemble is denoted as \textit{RENAULT} (Rainbow ENsemble with AUxiLiary Tasks).
	
	We propose to use model learning (i.e., learning transition function and reward function) and learning to predict properties related to objects and events as our auxiliary tasks. Model learning has already been used in model-based RL \cite{kaiser2019model}, whereas predicting properties related to objects and events appears to be quite natural -- rewards and hence the returns are often associated with events in the environment. 
	
	We only consider tasks that can easily be integrated with the ensemble. Some methods such as CURL \cite{laskin2020curl} requires substantial changes to the base algorithm such as requiring momentum target network and data augmentation input, making their use in \textit{RENAULT} difficult.
	
	\subsection{Network Architecture}
	Before delving into each auxiliary task, we will describe our network architecture in detail. Our network consists of two main components: a feature/latent state extraction function $h(s) = z$ and a latent Q function $q(z)$. The feature function $h$ is a two layer convolution neural networks. Due to the use of dueling architecture, $q(z) = \frac{1}{|\sA|} \sum_{a=1}^{|\sA|} \text{adv}(z,a) + v(z)$, where $|\sA|$ is the action space of the problem, adv is a latent advantage function, and $v$ is a latent value function. Both $\text{adv}$ and $v$ are two layer fully-connected networks with ReLU as a first layer activation function. We use $\text{adv}_1$ ($v_1$) to denote the first layer of the $\text{adv}$ ($v$) function.
	
	\subsection{Model Learning as Auxiliary Tasks} \label{aux/model-learning}
	Our first auxiliary tasks are based on model learning. 
	Model learning is widely used in the context of model-based RL; but here we are using them as auxiliary tasks for DQN. They are easy to use with DQN; each task operates independently and requires no additional changes to the base algorithm. The detail on each model learning task is provided below.
	
	\noindent
	\textbf{Latent State Transition. } We learn a deterministic latent transition function which maps a latent state $z=h(s)$ and action $a$ to its next latent state through a parameterized function $T(z'|z,a;\theta)$. Given the actual next state $s'$, we seek to minimize the loss between the predicted latent state $z'$ and $h(s')$. We use smooth L1 loss \cite{huber1992robust} as our objective function.
	
	\noindent
	\textbf{Inverse Dynamic. } Inverse dynamic \cite{badia2019never} is a function that learns to predict the action that causes a transition from a certain state $s$ to another state $s'$. Given $z=h(s)$ and $z'=h(s')$, we seek to learn a parameterized function $T^{-1}(\hat{a}|z,z';\theta)$ by minimizing the loss of predicted action $\hat{a}$ with the real action $a$ via cross entropy loss.
	
	\noindent
	\textbf{Reward Function. } Let $w_1=\text{adv}_1(z,\cdot)$ and $w_2=v_1(z)$ be hidden representations corresponding to the output of the first layer of latent advantage function and latent value function for a latent state $z=h(s)$. Given an action $a$, and let $w=[w_1,w_2]$ be the concatenation of hidden representations $w_1$ and $w_2$, we seek to learn a reward function $r(w,a;\theta)$ by minimizing the distributional histogram loss \cite{imani2018improving} with the real reward $r(s,a)$. The use of reward function as an auxiliary prediction is not new \cite{jaderberg2016reinforcement}. However, we specify the task as a distributional prediction instead of classification, generalizing their formulation.
	
	\subsection{Object and Event based Auxiliary Tasks} \label{aux/objects-and-events}
	
	Our second set of auxiliary tasks aim to learn features that are useful for object and event based prediction.
	We propose two novel auxiliary tasks: change of moment and total change of intensity, to encourage learning features related to objects and events, respectively. The proposed tasks are simple, self-contained, and fairly general when objects and events are present, making them ideal for RENAULT. The detail of these two tasks are given as follows.
	
	\noindent
	\textbf{Change of moment. } We adopt the concept of moment in physics; a way to account for the distribution of physical quantities based on the product of distance and the quantities. In our case, we use pixels in place of the physical quantities. Thus, the moment corresponds to the distribution of the pixels, which roughly characterizes the distribution of the objects in the screen. For a given image state $s \in \R^{C\times W \times H}$ with channel $C$, width $W$, and height $H$, the moment is computed by $\mu(s) = \frac{1}{C} \sum_{c}^C \sum_{x,y}^{W,H} d(x,y) \times s_{c,x,y}$  where $d$ is a distance function to some reference point. We use coordinate $(0,0)$ as a reference point and euclidean distance as a distance function. We learn a function that captures the change of moment between a state $s$ and its corresponding next state $s'$ given an action $a$: $\delta_\mu(z,a;\theta) \approx \mu(s') - \mu(s)$, where $z=h(s)$ is the latent state of $s$. For stability, we normalize the change of moment by a squared total distance given by $d$. The function is optimized with smooth L1 loss.
	
	\noindent
	\textbf{Total change of intensity. } 
	An event is often characterized by the change of total pixel intensity. For example, objects disappearing due to destruction results in the loss of total pixel intensity, spawning of enemies increases the total pixel intensity, and an explosion triggers dramatic total change of intensity. As such, learning total change of intensity can be a sufficiently strong signal to learn to associate rewards with events. Similar idea regarding learning changes of pixels intensity has been explored by \cite{jaderberg2016reinforcement}, however, they propose to predict changes of intensity in the downsampled image patches using architecture similar to autoencoder. In contrast, we opt for a simpler objective of predicting the total change of intensity instead.
	
	Given an image state $s$ and its corresponding next state $s'$ given an action $a$, we denote the channel-mean of the state as $\hat{s}$ and the next state as $\hat{s}'$. In addition, we denote the latent state of a state $s$ as $z=h(s)$. We seek to learn a total change of intensity function $\delta_i(z,a;\theta) \approx || \hat{s} - \hat{s}' ||_2$. Since the value is bounded, we adopt the Histogram distributional loss similar to our reward function prediction.

	\section{Theoretical Analysis}
	
	In this section, we perform analysis to help understand the possible gains afforded by REN and RENAULT. We analyze the generalization error through bias-variance-covariance decomposition. Such an analysis is obviously inadequate for reinforcement learning, but we use it to potentially uncover good ways to use ensembles. We then run experiments to see which of the methods actually help for the case study.
	
	We seek to decompose the generalization error of ensembles into bias, variance, and covariance in the form similar to one proposed by \cite{ueda1996generalization}. Our analysis differ in that our decomposition focuses on ensembles learned through the use of a single dataset with a randomized learning algorithm. In contrast, their decomposition assumes that each member is trained on a different dataset. 
	
	We begin with the case of a single estimator. For the purpose of analysis, we assume our targets are generated by a fixed target function corresponding to the optimal Q-function $f^*(\vx)$, possibly corrupted by noise, and that our inputs are generated by a fixed policy. We want to learn a function $f(\vx;\theta)$ to approximate the unknown target function  by using a set of $N$ training samples $\{(\vx_i, y_i)\}_{i=1}^N$. For convenience, we denote $z^N=\{z_i\}_{i=1}^N$ to be a realization of a random set $Z^N=\{Z_i\}_{i=1}^N$, where $z_i=(x_i, y_i)$ and $Z_i=(X_i,Y_i)$. The parameter $\theta$ of $f(\vx; \theta)$ is learnt by a randomized algorithm $\gA(r, z^N)$, where $r$ is a random number drawn independently from a random set $\gR$. We will use $f(\vx; r, z^N)$ to refer to this parameterized function.
	
	Given a separate test vector $Z_0 = (X_0, Y_0)$, the generalization error of the function $f$ is
	$GE(f) = E_{Z^N, \gR}[ E_{Z_0}[(Y_0 - f(X_0;\gR, Z^N))^2] ]$.
	Let         
	\begin{align*}
	\Var(f|X_0) &= \E\Big[ \big( f(X_0;\gR, Z^N) - \E[ f(X_0;\gR, Z^N) ] \big)^2 \Big] \\
	\Bias(f|X_0) &= \E[ f(X_0;\gR, Z^N) ] - f^*(X_0)
	\end{align*}
	where $\E$ denotes the expectation $\E_{\gR, Z^N}$, and let $\sigma^2 = \E_{X_0,Y_0}[(f^*(X_0)-Y_0)^2]$ be an irreducible error.
	
	\begin{theorem}[Generalization error of random algorithm]
		The generalization error of the estimator $f$ can be decomposed as follows.
		\begin{align}
		GE(f) = E_{X_0}[ \Var(f|X_0) + \Bias(f|X_0)^2 ] + \sigma^2. \label{eq:generalization-error/bias-var}
		\end{align}
		\label{theorem:mse-decomposition}
	\end{theorem}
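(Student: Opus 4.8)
The plan is to carry out the classical orthogonal decomposition of the squared error, but with care taken to track the two independent sources of randomness---the training set $Z^N$ and the algorithmic seed drawn from $\gR$---over which the expectation $\E = \E_{\gR, Z^N}$ is taken, together with the independent test point $Z_0 = (X_0, Y_0)$. First I would condition on the test input $X_0$ and abbreviate $\bar f(X_0) = \E[f(X_0; \gR, Z^N)]$ for the mean prediction at $X_0$. Then I would insert $f^*(X_0)$ and $\bar f(X_0)$ into the residual, writing
\begin{align*}
Y_0 - f(X_0; \gR, Z^N) = \big(Y_0 - f^*(X_0)\big) + \big(f^*(X_0) - \bar f(X_0)\big) + \big(\bar f(X_0) - f(X_0; \gR, Z^N)\big),
\end{align*}
squaring, and expanding into three squared terms and three cross terms.

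The three squared terms, after taking the remaining expectations, are exactly the pieces of the claimed decomposition: $\E_{Z_0}[(Y_0 - f^*(X_0))^2] = \sigma^2$; the term $(f^*(X_0) - \bar f(X_0))^2$ is $\Bias(f|X_0)^2$; and $\E[(\bar f(X_0) - f(X_0; \gR, Z^N))^2] = \Var(f|X_0)$, directly from the definitions stated before the theorem. Averaging over $X_0$ then yields $\E_{X_0}[\Var(f|X_0) + \Bias(f|X_0)^2] + \sigma^2$, so the remaining work is to show that the three cross terms contribute nothing.

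The substance of the argument is precisely the vanishing of those cross terms. Two of them contain the factor $\bar f(X_0) - f(X_0; \gR, Z^N)$, whose conditional expectation given $X_0$ is zero by the very definition of $\bar f(X_0)$; since the companion factor in each is either deterministic given $X_0$ (namely $f^*(X_0) - \bar f(X_0)$) or, as with $Y_0 - f^*(X_0)$, independent of $(\gR, Z^N)$, the expectation factorizes and the term drops out. Here the independence of the test point $Z_0$ from the pair $(\gR, Z^N)$ is essential. The final cross term, $2\,\E[(Y_0 - f^*(X_0))(f^*(X_0) - \bar f(X_0))]$, factors given $X_0$ into $(f^*(X_0) - \bar f(X_0))$ times $\E[Y_0 - f^*(X_0) \mid X_0]$, and so vanishes precisely because $f^*$ is the conditional-mean (optimal) target.

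The main obstacle is bookkeeping rather than any hard estimate: the only real content is being explicit about which variables each expectation integrates over, so that the two ``mean-zero deviation'' cross terms and the ``mean-zero noise'' cross term are each seen to vanish for the correct reason. In particular I would state the zero-mean noise assumption $\E[Y_0 \mid X_0] = f^*(X_0)$ explicitly, since it is exactly what kills the last cross term and is implicit in the phrase that the targets are generated by $f^*$ ``possibly corrupted by noise.''
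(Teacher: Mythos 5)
Your proposal is correct and follows essentially the same route as the paper's proof: the paper performs the same orthogonal decomposition in two sequential steps (first inserting $\overline{f}(X_0)$, then inserting $f^*(X_0)$ into the remaining term), whereas you expand all three pieces at once, but the pivot quantities, the squared terms, and the reasons the cross terms vanish (the mean-zero deviation of $f$ about $\overline{f}$, and $f^*(X_0)=\E[Y_0\mid X_0]$, which the paper likewise makes explicit) are identical.
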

	
	All proofs are provided in Appendix \ref{appendix:proofs}.    
	
	Now, we will consider the case of ensemble estimators. Let there be $M$ estimators $\{f_m\}_{m=1}^M$; each estimator $f_m$ is trained using algorithm $\gA$ with an individual random number $r^{(m)}$. Note that $r^{(m)}$ is a realization of $\gR^{(m)}$. Given an input $\vx$, the output of the ensemble is:
	\begin{align}
	f_{ens}^{(M)}(\vx) = \frac{1}{M} \sum_{m=1}^{M} f(\vx;r^{(m)}, z^N)
	\end{align}
	Following \eqref{eq:generalization-error/bias-var}, the generalization error is given by
	\begin{align}
	GE(f_{ens}^{(M)}) = \E_{X_0}[ \Var(f_{ens}^{(M)}|X_0) + \Bias(f_{ens}^{(M)}|X_0)^2 ] + \sigma^2.
	\end{align}
	Let
	\begin{align*}
	\begin{split}
	\Cov(f_m, f_{m'}|X_0) = &\E\Big[\big( f(X_0;\gR^{(m)}, Z^N) - \E[ f(X_0;\gR^{(m)}, Z^N) ] \big) \\
	&\hspace{1em}\big( f(X_0;\gR^{(m')}, Z^N) - \E[ f(X_0;\gR^{(m')}, Z^N) ] \big) \Big].
	\end{split}
	\end{align*}
	\begin{align*}
	\begin{split}
	\overline{\Bias}(X_0) &= \frac{1}{M} \sum_{m=1}^M \Bias(f_m|X_0) \\
	\overline{\Var}(X_0) &= \frac{1}{M} \sum_{m=1}^M \Var(f_m|X_0) \\
	\overline{\Cov}(X_0) &= \frac{1}{M(M-1)} \sum_{m} \sum_{m'\ne m} \Cov(f_m, f_{m'}|X_0).
	\end{split} \label{eq:ensemble-bias-var-covar}
	\end{align*}
	
	\begin{theorem}[Generalization error of ensemble with random algorithm]
		The generalization error of the ensemble estimator $f_{ens}^{(M)}$ can be decomposed as:
		\begin{align}
		\begin{split}
		GE(f_{ens}^{(M)}) = \E_{X_0}\Big[ &\overline{\Bias}(X_0)^2 + \frac{1}{M} \overline{\Var}(X_0) + \big(1-\frac{1}{M}\big) \overline{\Cov} (X_0) \Big] + \sigma^2
		\end{split}
		\end{align}
		\label{theorem:ensemble-decomposition}
	\end{theorem}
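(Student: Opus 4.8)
The plan is to reduce everything to Theorem~\ref{theorem:mse-decomposition}. Regard the ensemble average $f_{ens}^{(M)}$ as itself the output of a single randomized algorithm whose source of randomness is the tuple $(\gR^{(1)},\dots,\gR^{(M)})$ together with the shared training set $Z^N$. Applying \eqref{eq:generalization-error/bias-var} to $f_{ens}^{(M)}$ gives $GE(f_{ens}^{(M)}) = \E_{X_0}[\,\Var(f_{ens}^{(M)}|X_0) + \Bias(f_{ens}^{(M)}|X_0)^2\,] + \sigma^2$, so it remains only to re-express the conditional variance and bias of the ensemble in terms of the per-member quantities $\overline{\Bias}(X_0)$, $\overline{\Var}(X_0)$, and $\overline{\Cov}(X_0)$.

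The bias term is immediate from linearity of expectation: since $\E[f_{ens}^{(M)}(X_0)] = \tfrac{1}{M}\sum_{m} \E[f(X_0;\gR^{(m)},Z^N)]$, subtracting $f^*(X_0)$ and distributing the factor $\tfrac1M$ yields $\Bias(f_{ens}^{(M)}|X_0) = \tfrac1M\sum_m \Bias(f_m|X_0) = \overline{\Bias}(X_0)$.

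For the variance I would introduce the centered members $g_m = f(X_0;\gR^{(m)},Z^N) - \E[f(X_0;\gR^{(m)},Z^N)]$, so that the centered ensemble is $\tfrac1M\sum_m g_m$. Expanding the square gives $\Var(f_{ens}^{(M)}|X_0) = \tfrac{1}{M^2}\E[(\sum_m g_m)^2]$, and I would split the resulting double sum into its diagonal and off-diagonal parts. The diagonal terms sum to $\sum_m \Var(f_m|X_0) = M\,\overline{\Var}(X_0)$ and the off-diagonal terms sum to $\sum_m\sum_{m'\ne m}\Cov(f_m,f_{m'}|X_0) = M(M-1)\,\overline{\Cov}(X_0)$, directly from the definitions. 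Dividing by $M^2$ produces $\tfrac1M\overline{\Var}(X_0) + (1-\tfrac1M)\overline{\Cov}(X_0)$. Substituting the bias and variance expressions back and pulling $\E_{X_0}$ through the sum completes the decomposition.

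I expect the only subtle point to be bookkeeping over the correct joint expectation rather than any hard calculation. Because all members share the same realization $Z^N$ while the seeds $\gR^{(m)}$ are independent, the off-diagonal expectations $\E[g_m g_{m'}]$ with $m\ne m'$ do not vanish: they are genuine covariances induced solely by the shared dataset. This is precisely what distinguishes the single-dataset decomposition from the classical multi-dataset one, so the ``hard part'' is conceptual rather than computational — one must be careful not to treat distinct members as independent and set $\overline{\Cov}(X_0)$ to zero.
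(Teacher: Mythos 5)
Your proposal is correct and follows essentially the same route as the paper: apply the single-estimator decomposition of Theorem~\ref{theorem:mse-decomposition} to $f_{ens}^{(M)}$ viewed as one randomized estimator, obtain the bias by linearity, and split the variance of the average into diagonal (variance) and off-diagonal (covariance) terms. Your closing remark that the off-diagonal terms survive because all members share the same realization of $Z^N$ is exactly the point the paper emphasizes in distinguishing this from the classical multi-dataset decomposition.
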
 
	
	Theorem \ref{theorem:mse-decomposition} and \ref{theorem:ensemble-decomposition} follow the proof from \cite{geman1992neural} and \cite{ueda1996generalization}, respectively. Although the results look similar, there are subtle differences in terms of the assumption with respect to the availability of multiple datasets and the use of randomness. 
	
	By analysing, the relationship between $\overline{\Var}(X_0)$ and $\overline{\Cov}(X_0)$, we obtain the following results about REN.
	\begin{theorem}
		$\overline{\Var}(X_0) \le \overline{\Cov}(X_0)$. Hence, if ensemble estimator $f_{ens}^{(M)}$ consists of $M$ identical estimators $f$ that differ only in the random numbers used, then $GE(f_{ens}^{(M)}) \le GE(f)$.
	\end{theorem}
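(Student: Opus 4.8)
The plan is to establish the statement by first determining the exact ordering of the two quantities $\overline{\Var}(X_0)$ and $\overline{\Cov}(X_0)$ that appear in it, for an ensemble whose members differ only through the independent random numbers $\gR^{(m)}$ while sharing the single dataset $Z^N$. I would begin with a symmetry reduction: since the members $f_m = f(X_0;\gR^{(m)},Z^N)$ are identically distributed (the same algorithm $\gA$ and i.i.d.\ draws $\gR^{(m)}$), they share a common conditional mean, a common variance, and a common pairwise covariance. Hence $\overline{\Bias}(X_0)=\Bias(f|X_0)$, $\overline{\Var}(X_0)=\Var(f|X_0)$, and $\overline{\Cov}(X_0)$ equals the common value $\Cov(f_m,f_{m'}|X_0)$, so the whole statement collapses to comparing one variance with one covariance at each fixed $X_0$.

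The key step is to condition on the shared dataset and apply the laws of total variance and total covariance. Writing $g(Z^N)=\E_{\gR}[f(X_0;\gR,Z^N)\mid Z^N]$ for the conditional mean over the randomization, I would observe that, given $Z^N$, two distinct members depend only on their own independent draws and are therefore conditionally independent, so $\Cov(f_m,f_{m'}\mid X_0,Z^N)=0$; the law of total covariance then collapses the covariance to the between-dataset term, $\overline{\Cov}(X_0)=\Var_{Z^N}(g(Z^N))$. The law of total variance gives $\overline{\Var}(X_0)=\E_{Z^N}[\Var_{\gR}(f(X_0;\gR,Z^N)\mid Z^N)]+\Var_{Z^N}(g(Z^N))$. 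Subtracting yields the clean identity
\begin{align*}
\overline{\Var}(X_0)-\overline{\Cov}(X_0)=\E_{Z^N}\big[\Var_{\gR}\big(f(X_0;\gR,Z^N)\,\big|\,Z^N\big)\big],
\end{align*}
whose right-hand side is an expected conditional variance, hence nonnegative, with equality exactly when $\gA$ is almost surely constant in $\gR$ given the data. This pins down the ordering as $\overline{\Var}(X_0)\ge\overline{\Cov}(X_0)$, which is precisely the direction Theorem~\ref{theorem:ensemble-decomposition} needs for the conclusion; the same ordering follows in one line from Cauchy--Schwarz, $\overline{\Cov}(X_0)\le\sqrt{\Var(f_m|X_0)\Var(f_{m'}|X_0)}=\overline{\Var}(X_0)$, but I prefer the total-variance route since it identifies the gap as the randomization variance that averaging removes.

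Finally I would substitute into the decompositions of Theorems~\ref{theorem:mse-decomposition} and~\ref{theorem:ensemble-decomposition}. With $\overline{\Bias}(X_0)=\Bias(f|X_0)$ and $\overline{\Var}(X_0)=\Var(f|X_0)$, the bias and irreducible terms cancel, and the two generalization errors differ by
\begin{align*}
GE(f_{ens}^{(M)})-GE(f)=\Big(1-\tfrac{1}{M}\Big)\,\E_{X_0}\big[\overline{\Cov}(X_0)-\overline{\Var}(X_0)\big],
\end{align*}
which is nonpositive by the identity above, giving $GE(f_{ens}^{(M)})\le GE(f)$ for every $M\ge 1$. I expect the main obstacle to be the conditional-independence step: it must be phrased as independence of $f_m$ and $f_{m'}$ given $(X_0,Z^N)$ so that the conditional covariance genuinely vanishes, while keeping careful track that the outer expectation runs over the shared $Z^N$ together with the independent $\gR^{(m)}$, and that the outer $\E_{X_0}$ is applied only after the pointwise-in-$X_0$ ordering has been established.
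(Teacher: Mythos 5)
Your proposal is correct and reaches the paper's conclusion by a genuinely different argument. The paper's proof is the one-line route you mention only in passing: a standalone lemma (Lemma~\ref{lemma:cov-le-var}, Cauchy--Schwarz followed by AM--GM) gives $\overline{\Cov}(X_0)\le\overline{\Var}(X_0)$ for \emph{any} collection of random variables, after which identical members give $\overline{\Var}(X_0)=\Var(f|X_0)$ and $\overline{\Bias}(X_0)=\Bias(f|X_0)$, and substitution into Theorem~\ref{theorem:ensemble-decomposition} finishes, exactly as in your last step; you are also right that the inequality in the theorem statement is printed backwards, since the paper's own appendix proof, like yours, establishes $\overline{\Cov}(X_0)\le\overline{\Var}(X_0)$, the direction the conclusion needs. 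Your route instead conditions on the shared dataset: conditional independence of distinct members given $Z^N$ kills the conditional covariance, so the law of total covariance collapses $\overline{\Cov}(X_0)$ to the variance of $g(Z^N)=\E_{\gR}[f\mid Z^N]$, and the law of total variance yields the identity $\overline{\Var}(X_0)-\overline{\Cov}(X_0)=\E_{Z^N}[\Var_{\gR}(f(X_0;\gR,Z^N)\mid Z^N)]\ge 0$. This buys more than the paper's lemma: it quantifies the ensemble's gain as exactly $(1-\tfrac{1}{M})$ times the expected randomization variance that averaging removes, pins down the equality case (output a.s.\ constant in $\gR$ given the data, matching the paper's informal remark about convex optimization), and subsumes Proposition~\ref{proposition:ensemble-covariance}, which the paper proves separately by the same conditional-independence manipulation. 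What the paper's Cauchy--Schwarz lemma buys in exchange is generality: it needs no independence of the $\gR^{(m)}$, so it bounds $\overline{\Cov}$ even for coupled or jointly trained members, where your conditional-independence step would fail.
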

	
	This result states that ensembles with the members trained the same way cannot hurt performance. The equal case can happen, e.g. when algorithm $\gA$ performs convex optimization, it will converge to the same minima regardless of random number used, resulting in all members of the ensemble being the same. In contrast, non-convex optimization algorithms such as SGD converges to a minima that depends on the randomness, thus will likely result in lower error due to reduction in the covariance term. Hence, REN will achieve \emph{at least} the same performance as Rainbow, and possibly better, under the idealized assumptions.
	
	Instead of training each member of the ensemble $f(\vx;r^{(m)}, z^N)$ separately, training the entire ensemble $f_{ens}^{(M)}(\vx)$ directly on the training set would result in lower training set error and possibly better generalization. We experiment with this as well in the case study.
	
	For a single network, auxiliary tasks usually reduce the variance as they provide additional information that help constrain the network. However, this may come at the cost of additional bias as the network needs to optimize for multiple objectives. To further understand the effects of auxiliary losses, we decompose the ensemble squared bias.
	\newcommand{\Cob}{\mathrm{Cob}}
	\begin{proposition}
		The $\overline{\Bias}(X_0)^2$ of ensemble estimator $f_{ens}^{(M)}$ can be decomposed as follows ($X_0$ is omitted for readability),
		\begin{align}
		\frac{1}{M^2} \Big[ \sum_{m}^M \Bias(f_m)^2 + \sum_{m} \sum_{m'\ne m} \Cob(f_m,f_m') \Big]
		\end{align}
		where $\Cob(f_m,f_m'|X_0)=\Bias(f_m|X_0)\Bias(f_{m'}|X_0)$ denote the product of bias; we refer to this as co-bias.
		\label{proposition:ensemble-bias-decomposition}
	\end{proposition}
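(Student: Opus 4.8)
The statement is a purely algebraic identity about the square of an arithmetic mean, so the plan is to expand the square directly. The key observation I would make first is that, for a fixed test input $X_0$, each quantity $\Bias(f_m|X_0) = \E[f(X_0;\gR^{(m)}, Z^N)] - f^*(X_0)$ is already an expectation, hence a deterministic scalar. Consequently $\overline{\Bias}(X_0) = \frac{1}{M}\sum_{m=1}^M \Bias(f_m|X_0)$ is an ordinary real number, and squaring it involves no probabilistic manipulation whatsoever — this is worth stating explicitly because it is the only point where one might worry about interchanging expectation and squaring.

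Given that, I would simply write
\begin{align*}
\overline{\Bias}(X_0)^2 = \Big(\frac{1}{M}\sum_{m=1}^M \Bias(f_m|X_0)\Big)^2 = \frac{1}{M^2}\Big(\sum_{m=1}^M \Bias(f_m|X_0)\Big)^2,
\end{align*}
pulling the factor $1/M^2$ out of the square. The next step is to expand the square of the sum as a double sum over all ordered pairs $(m,m')$, using the elementary identity $\big(\sum_m b_m\big)^2 = \sum_m\sum_{m'} b_m b_{m'}$ with $b_m = \Bias(f_m|X_0)$.

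I would then split this double sum into its diagonal and off-diagonal contributions: the terms with $m'=m$ collect into $\sum_{m=1}^M \Bias(f_m|X_0)^2$, and the terms with $m'\ne m$ collect into $\sum_{m}\sum_{m'\ne m} \Bias(f_m|X_0)\Bias(f_{m'}|X_0)$. Finally I would substitute the definition $\Cob(f_m,f_{m'}|X_0) = \Bias(f_m|X_0)\Bias(f_{m'}|X_0)$ into the off-diagonal sum, yielding exactly the claimed expression $\frac{1}{M^2}\big[\sum_m \Bias(f_m)^2 + \sum_m\sum_{m'\ne m}\Cob(f_m,f_{m'})\big]$ (with $X_0$ suppressed).

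There is no genuine obstacle here: the entire content is the expansion of $(\sum b_m)^2$ and relabelling the cross terms as co-bias. The only thing requiring a word of care is the remark above — that at fixed $X_0$ the biases are constants, so the manipulation is deterministic algebra rather than a moment computation — and that the normalization $1/M^2$ is consistent with the $1/M$ appearing in the definition of $\overline{\Bias}(X_0)$. I would keep the writeup to these few lines.
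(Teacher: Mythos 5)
Your proof is correct and follows exactly the same route as the paper's: expand $\big(\tfrac{1}{M}\sum_m \Bias(f_m|X_0)\big)^2$ into diagonal and off-diagonal terms and relabel the cross terms as co-bias. Your added remark that the biases are deterministic scalars at fixed $X_0$ is a harmless (and correct) clarification the paper leaves implicit.
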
 
	
	This result suggests that lower ensemble generalization error can be obtained by increasing the number of negative co-bias by having a diverse set of positively and negatively biased members. This is more likely to be achieved in RENAULT if each member is assigned a unique set of auxiliary tasks. In contrast, assigning the same set of auxiliary tasks to each member results in $\overline{\Bias}(X_0) = \Bias(f|X_0)$ because $\forall_m \Bias(f_m|X_0) = \Bias(f|X_0)$. 
	
	\begin{proposition}
		Let $\bar{f}_{m,Z} = \E_{\gR^{(m)}}[ f_m | Z^N ]$ be a conditional expectation of $f_m$ over random number $\gR^{(m)}$ conditioned on $Z^N$. If the estimators are trained independently, then, $Cov(f_m,f_m') = E_{Z^N}[ (\bar{f}_{m,Z^N}-E[f_m])(\bar{f}_{m',Z^N}-E[f_{m'}]) ]$.\label{proposition:ensemble-covariance}
	\end{proposition}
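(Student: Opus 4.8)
The plan is to reduce everything to the definition of the covariance and then exploit the single fact that characterizes independent training: the random draws $\gR^{(m)}$ and $\gR^{(m')}$ are independent of each other and of $Z^N$, while both members are fit on the \emph{same} realized dataset $Z^N$. Writing $f_m = f(X_0;\gR^{(m)},Z^N)$ and $f_{m'} = f(X_0;\gR^{(m')},Z^N)$, and abbreviating $\mu_m = \E[f_m]$ and $\mu_{m'} = \E[f_{m'}]$, I would start from the elementary identity $\Cov(f_m,f_{m'}|X_0) = \E[f_m f_{m'}] - \mu_m \mu_{m'}$, where $\E$ is taken jointly over $\gR^{(m)}$, $\gR^{(m')}$, and $Z^N$.

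The core step is to evaluate $\E[f_m f_{m'}]$ by the tower property, conditioning on the shared dataset: $\E[f_m f_{m'}] = \E_{Z^N}\!\big[\E[f_m f_{m'}\mid Z^N]\big]$. Conditioned on $Z^N$, the only remaining randomness in $f_m$ and $f_{m'}$ enters through $\gR^{(m)}$ and $\gR^{(m')}$, which are independent under independent training; hence the conditional expectation of the product factors as $\E[f_m f_{m'}\mid Z^N] = \bar f_{m,Z^N}\,\bar f_{m',Z^N}$, yielding $\E[f_m f_{m'}] = \E_{Z^N}[\bar f_{m,Z^N}\,\bar f_{m',Z^N}]$.

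To finish I would expand the claimed right-hand side $\E_{Z^N}[(\bar f_{m,Z^N}-\mu_m)(\bar f_{m',Z^N}-\mu_{m'})]$ and apply the tower property once more in the form $\E_{Z^N}[\bar f_{m,Z^N}] = \E_{Z^N}\E_{\gR^{(m)}}[f_m\mid Z^N] = \mu_m$, and likewise for $m'$. The two cross terms then collapse to $-2\mu_m\mu_{m'}$, which combine with the $+\mu_m\mu_{m'}$ from the final term to leave $\E_{Z^N}[\bar f_{m,Z^N}\bar f_{m',Z^N}] - \mu_m\mu_{m'}$, exactly the expression obtained for $\Cov(f_m,f_{m'}|X_0)$ above.

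I expect no real obstacle in the algebra; the one point requiring care is justifying the factorization $\E[f_m f_{m'}\mid Z^N] = \bar f_{m,Z^N}\,\bar f_{m',Z^N}$. This is precisely where the hypothesis ``trained independently'' is invoked: it must be read as conditional independence of $\gR^{(m)}$ and $\gR^{(m')}$ given $Z^N$ (indeed they are drawn independently of $Z^N$ entirely). The resulting identity is also interpretable — because the members share the training set, their covariance does not vanish but equals the covariance of the conditional means $\bar f_{m,Z^N}$ viewed as functions of the random dataset, isolating the shared data as the sole source of correlation between independently trained members.
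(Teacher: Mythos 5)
Your proposal is correct and follows essentially the same route as the paper's proof: both hinge on conditioning on the shared dataset $Z^N$ via the tower property and factorizing the conditional expectation of the product using the (conditional) independence of $\gR^{(m)}$ and $\gR^{(m')}$. The only cosmetic difference is that you work with the uncentered identity $\Cov(f_m,f_{m'})=\E[f_m f_{m'}]-\E[f_m]\E[f_{m'}]$ and then re-center, whereas the paper factorizes the centered product directly.
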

	
	This result suggests that RENAULT may also reduce covariance if appropriate auxiliary tasks are assigned to each member. Otherwise, if all members are of the same model $f$, then $Cov(f_m,f_m') = E_{Z^N}[ (\bar{f}_{Z^N}-E[f])^2]$ which is the variance of the averaged estimator.
	
	\textbf{Limitations. }
	Our analysis assumes a fixed target; this is not available in RL. Instead, we have an estimate of the target (optimal Q value) based on TD return. The dataset in RL is also generated by a non-stationary policy, thus the distribution of the dataset keeps on changing during learning. Additionally, exploration also plays an important role in the learning of Q-function. Thus, it is important to note that our analysis will only provide partial insights regarding the methods; it serves to suggest possible ways to improve the algorithms, but the suggestions may not always help.
	\section{Experiments}
	
	In this section, we perform a set of experiments on REN and RENAULT. We compare them to prior methods. We examine whether joint training is better than independent training. We examine whether the auxiliary tasks help the ensembles and how to best use the auxiliary tasks.
	Before delving into each experiment, we will explain the problem domain of our case study, our architecture and hyperparameters, and our methods in detail.
	
	\textbf{Problem Domain. } We evaluate REN and RENAULT on a suite of Atari games from Atari Learning Environment (ALE) benchmark. We follow the evaluation procedure of \cite{kaiser2019model}; particularly, we limit the environment interaction to 100K interactions (400K frames with action repeated 4 frames) and evaluate on a subset of 26 games. We measure the raw performance score and human-normalized score, calculated as $100 \times (\text{Method score} - \text{Random score})/(\text{Human score} - \text{Random score})$.
	
	\textbf{Architecture and hyperparameters. } We follow the data-efficient Rainbow (DE-Rainbow) architecture and hyperparameters proposed by \cite{van2019use} and made no change to them. REN introduces a hyperparameter $M$ which controls the number of members of the ensemble. RENAULT introduces task and member specific hyperparameters $\alpha_{m,n}$ that control the strength of each auxiliary task. Additionally, each auxiliary task adopts different architecture; we give their description in Appendix \ref{appendix:auxiliary-task-architecture}.
	
	In our preliminary experiment, we found that $M < 5$ degrades performance and higher $M$ does not increase performance significantly while requiring more resources. Thus we fix $M=5$ throughout the experiment. 
	
	\textbf{Our Methods. } REN has two variants; one that is canonical according to our description in Section \ref{rainbow-ensemble}, and one that optimizes all members jointly, which we refer to as \emph{REN-J}. RENAULT also has two variants based on how we distribute the auxiliary tasks. The first variant, which we simply refer to as RENAULT, follows the suggestion of the preceding section to distribute the auxiliary tasks. As the number of auxiliary tasks equals to the number of members, we simply assign one \emph{unique} task for each member. In contrast, the second variant assigns \emph{all} tasks to each member, thus we call this variant \emph{RENAULT-all}. For simplicity, RENAULT uses $\alpha_{m,n}=1$ for all member $m$ and task $n$. 
	For RENAULT-all, we set $\alpha_{m,n} = \frac{1}{N_m}$, where $N_m=5$ is the number of auxiliary tasks for member $m$. This is to ensure that the auxiliary tasks do not overwhelm the main task.
	
	Further experimental details can be found in Appendix \ref{appendix:experimental-details}.

	\begin{table}\centering
		\caption{Performance on ATARI games on 100K interactions. Human Mean and Human Median indicate the mean and the median of the human-normalized score. The last two rows show the number of games won against DE-Rainbow and REN, respectively.}\label{tab:comparison-prior}
		\scriptsize
		\begin{tabular}{l|rrr|rr|rrr}\toprule
			&\textbf{SimPLe} &\textbf{OT-Rnbw} &\textbf{DE-Rnbw} &\textbf{REN} &\textbf{RNLT} &\textbf{REN-J} &\textbf{RNLT-all} \\\midrule
			alien &405.2 &824.7 &739.9 &828.7 &883.7 &800.3 &\textbf{890.0} \\
			amidar &88.0 &82.8 &188.6 &195.4 &\textbf{224.4} &120.2 &137.2 \\
			assault &369.3 &351.9 &431.2 &608.5 &\textbf{651.4} &504.0 &524.9 \\
			asterix &\textbf{1089.5} &628.5 &470.8 &578.3 &631.7 &645.0 &520.0 \\
			bank\_heist &8.2 &\textbf{182.1} &51.0 &63.3 &125.0 &64.7 &92.3 \\
			battle\_zone &5184.4 &4060.6 &10124.6 &\textbf{17500.0} &14233.3 &12666.7 &9000.0 \\
			boxing &9.1 &2.5 &0.2 &\textbf{10.9} &5.1 &5.2 &4.9 \\
			breakout &\textbf{12.7} &9.8 &1.9 &3.7 &3.4 &2.7 &3.0 \\
			chopper\_command &\textbf{1246.9} &1033.3 &861.8 &713.3 &896.7 &980.0 &563.3 \\
			crazy\_climber &\textbf{39827.8} &21327.8 &16185.3 &16523.3 &39460.0 &23613.3 &22123.3 \\
			demon\_attack &169.5 &711.8 &508.0 &759.3 &693.0 &665.5 &\textbf{822.7} \\
			freeway &20.3 &25.0 &27.9 &28.9 &29.3 &24.5 &\textbf{29.4} \\
			frostbite &254.7 &231.6 &866.8 &\textbf{2507.7} &1210.3 &2284.7 &1167.0 \\
			gopher &771.0 &\textbf{778.0} &349.5 &246.7 &542.7 &521.3 &323.3 \\
			hero &1295.1 &6458.8 &6857.0 &3817.2 &6568.8 &6499.3 &\textbf{7260.5} \\
			jamesbond &125.3 &112.3 &301.6 &518.3 &\textbf{628.3} &276.7 &420.0 \\
			kangaroo &323.1 &605.4 &779.3 &753.3 &540.0 &\textbf{893.3} &840.0 \\
			krull &\textbf{4539.9} &3277.9 &2851.5 &3105.1 &2831.3 &2667.2 &3827.0 \\
			kung\_fu\_master &\textbf{17257.2} &5722.2 &14346.1 &12576.7 &15703.3 &9616.7 &13423.3 \\
			ms\_pacman &762.8 &941.9 &1204.1 &1496.0 &\textbf{2002.7} &1240.7 &1705.0 \\
			pong &\textbf{5.2} &1.3 &-19.3 &-16.8 &-12.0 &-18.7 &-10.8 \\
			private\_eye &58.3 &\textbf{100.0} &97.8 &66.7 &66.7 &-35.2 &\textbf{100.0} \\
			qbert &559.8 &509.3 &1152.9 &1428.3 &583.3 &\textbf{2416.7} &1014.2 \\
			road\_runner &5169.4 &2696.7 &9600.0 &11446.7 &\textbf{13280.0} &5676.7 &7550.0 \\
			seaquest &370.9 &286.9 &354.1 &622.7 &\textbf{671.3} &555.3 &387.3 \\
			up\_n\_down &2152.6 &2847.6 &2877.4 &3568.0 &\textbf{4235.7} &3388.0 &3459.0 \\
			\midrule
			Human Mean &36.45\% &26.41\% &28.54\% &41.36\% &\textbf{45.64\%} &30.78\% &38.32\% \\
			Human Median &9.85\% &20.37\% &16.14\% &20.41\% &\textbf{25.08\%} &21.97\% &23.42\% \\
			\midrule
			vs DE-Rnbw &10 (-3) &12 (-1) &- &20 (+7) &\textbf{21 (+8)} &18 (+5) &19 (+6) \\
			vs REN &8 (-5) &10 (-3) &6 (-7) &- &\textbf{17 (+4)} &8 (-5) &13 (0) \\
			\bottomrule
		\end{tabular}
	\end{table}
	
	\begin{table}\centering
		\caption{Measurement of bias approximation, variance, covariance, irreducible error $\sigma^2$, and an approximation of generalization error ($\widehat{\text{GE}}$) of all methods. For Rainbow, $\widehat{\overline{\Bias}}$, $\overline{\Var}$, and $\overline{\Cov}$ denotes the estimator bias, variance, and covariance, respectively.}\label{tab:measurement}
		\scriptsize
		\begin{tabular}{l|rrrr|rr}\toprule
			&$\widehat{\overline{\Bias}}^2$ &$\overline{\Var}$ &$\overline{\Cov}$ &$\sigma^2$ &$\widehat{\text{GE}}$ \\\midrule
			REN &0.08 &1.09 &0.99 &1.02 &2.28 \\
			\midrule
			RENAULT &0.08 &0.82 &0.66 &0.63 &1.58 \\
			RENAULT-all &0.09 &0.81 &0.71 &0.74 &1.72 \\
			\midrule
			REN-J &0.07 &1.07 &0.51 &0.52 &1.41 \\
			Rainbow &0.08 &\textit{0.84} &- &0.70 &1.81 \\
			\bottomrule
		\end{tabular}
	\end{table}
	
	\subsection{Comparison to Prior Works}
	
	We compare the performance of REN and RENAULT to SimPLe \cite{kaiser2019model}, data-efficient Rainbow (DE-Rainbow) \cite{van2019use}, and Overtrained Rainbow (OT-Rainbow) \cite{kielak2020recent}. Two other recent works, CURL \cite{laskin2020curl} and SUNRISE \cite{lee2020sunrise} use game-dependent hyperparameters instead of using the same hyperparameters for all games, making their results not directly comparable to ours. The results are given in Table \ref{tab:comparison-prior}. We report the mean of three independent runs for our methods. We take the highest reported scores for SimPLe and human baselines, as they are reported differently in prior work \cite{van2016deep,kielak2020recent}.

	REN improves the performance of its baseline, data-efficient Rainbow on 20 out of 26 games and achieves better performance on 13 games. It also improves the mean and median human normalized performance $1.45 \times$ and $1.26 \times$, respectively. RENAULT further enhances the performance of REN, gaining $1.6 \times$ and $1.55 \times$ mean and median human normalized performance improvements. Additionally, it won on 21 games when compared to data-efficient Rainbow and exceeds REN's performance on 17 games. 
	
	\subsection{Bias-Variance-Covariance Measurements}
	
	To gain additional insights into our methods, we perform an empirical analysis by measuring their bias, variance, and covariance. Measuring bias requires the optimal Q-function which is unknown in RL. We measure the approximation to ensemble bias $\widehat{\Bias}(\theta)$ based on TD return in place of the real bias. We denote the ensemble bias based on this approximation as $\widehat{\overline{\Bias}}$. The detail of the measurements is given in Appendix \ref{appendix:bias-var-cov-measure}.
	
	The result of the measurements is given in Table \ref{tab:measurement}.
	
	We can see from Table~\ref{tab:measurement} that $\overline{\Cov}<\overline{\Var}$ in REN as expected from Proposition~\ref{theorem:ensemble-decomposition}. If the datasets used in the ensembles had been independent as well, we would have $\overline{\Cov}=0$, so the effects of independent randomization is more limited.
	RENAULT reduced the variance of REN as expected from the use of auxiliary tasks and running different tasks on different members of the ensemble appears to further reduce the covariance of RENAULT. 
	
	Comparison of REN and Rainbow also shows that our bias-variance-covariance measurements are not adequate for perfectly understanding the performance of the different algorithms. In particular, the generalization error of Rainbow is smaller than REN but REN had better performance. It is possible that the bias estimate using TD return is not a good proxy for the real bias; the TD return may be arbitrarily far from the optimal Q. Another possible reason could be that RL is much more than generalization error, which does not capture other aspects of RL such as exploration.
	
	\subsection{On Independent Training of Ensemble}
	
	Jointly optimizing all members of the ensemble would give better training error and possibly better generalization error. We compare the performance of REN with its variant, REN-J, that directly optimize the following loss: 
	\begin{align}
	\gL(\theta_{ens}) &= \E\big[ \KL[g_{s,a,r,s'}||Q_{ens}^{(M)}(s,a;\theta_{ens}) ] \big] 
	\end{align} where $\theta_{ens} = \{\theta_m\}_{m=1}^M$. 
	Since REN-J is essentially one single big neural network, it uses a single prioritized experience replay $\sD$ which is updated based on $\gL(\theta_{ens})$. 
	
	Table~\ref{tab:measurement} shows that REN-J indeed generalized better than REN. In particular, joint optimization substantially reduced $\overline{\Cov}$. However, REN surprisingly gives better overall performance compared REN-J. REN improves upon REN-J on 18 out of 26 games. It also improves the mean human normalized performance $1.34 \times$, although with a slight reduction of median performance of $0.93 \times$. When compared to data-efficient Rainbow, REN gains on two more games than REN-J.
	
	Contrary to expectation, in this case study, it is preferable to train an ensemble by optimizing each member independently, rather than treating the ensemble as a single monolithic neural network and optimize all members jointly to reduce its generalization error.
	
	\subsection{The Importance of Auxiliary Tasks}
	
	\begin{figure}[]
		\scriptsize
		\begin{center}
			\pgfplotstableread[row sep=\\,col sep=&]{
				interval & carT   \\
				NS      & 31.31 \\
				ID & 31.27  \\
				RF & 38.12  \\
				CI & 32.31  \\
				CM & 30.95  \\
			}\plotdata
			\begin{tikzpicture}
			\begin{axis}[
			width=.5\textwidth,
			height=.3\textwidth,
			ymax=45,
			ymin=20,
			yticklabel style={/pgf/number format/fixed},
			ylabel={Score (\%)},
			xtick=data,
			tickwidth=0,
			xticklabels from table={\plotdata}{interval},
			bar width=0.8,
			ybar=2pt,
			enlarge x limits={abs=0.55},
			nodes near coords,
			nodes near coords style={font=\tiny},
			]
			\addplot table [x expr=\coordindex,y=carT]{\plotdata};
			
			\coordinate (A) at (axis cs:0,28.54);
			\coordinate (O1) at (rel axis cs:0,0);
			\coordinate (O2) at (rel axis cs:1,0);
			\draw [black,sharp plot,dashed] (A -| O1) -- (A -| O2);
			
			\end{axis}
			\end{tikzpicture}
			\pgfplotstableread[row sep=\\,col sep=&]{
				interval & carT   \\
				NS      & 14 \\
				ID & 13  \\
				RF & 15  \\
				CI & 15  \\
				CM & 15  \\
			}\plotdata
			\begin{tikzpicture}
			\begin{axis}[
			width=.5\textwidth,
			height=.3\textwidth,
			ymax=16,
			ymin=12,
			yticklabel style={/pgf/number format/fixed},
			ylabel={\# Won},
			xtick=data,
			tickwidth=0,
			xticklabels from table={\plotdata}{interval},
			bar width=0.8,
			ybar=2pt,
			enlarge x limits={abs=0.55},
			nodes near coords,
			nodes near coords style={font=\tiny},
			]
			\addplot table [x expr=\coordindex,y=carT]{\plotdata};
			
			\coordinate (A) at (axis cs:0,13);
			\coordinate (O1) at (rel axis cs:0,0);
			\coordinate (O2) at (rel axis cs:1,0);
			\draw [black,sharp plot,dashed] (A -| O1) -- (A -| O2);
			
			\end{axis}
			\end{tikzpicture}
			
			\caption{Human normalized mean score (Left) and the number of games won (Right) of each member of the ensemble with (NS) latent next state prediction, (ID) inverse dynamic, (RF) reward function, (CI) total change of intensity, (CM) change of moment. As a reference, the performance of data-efficient Rainbow  is indicated by a dotted line.}
			\label{fig:aux-task-each}
			\vspace{-0.7cm}
		\end{center}
	\end{figure}
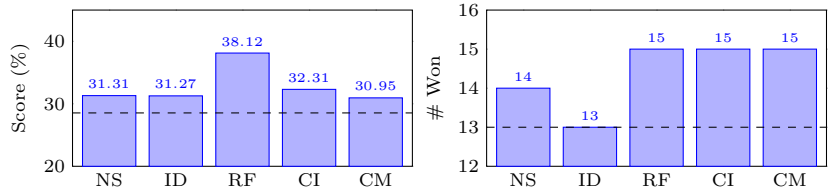
	
	Table \ref{tab:comparison-prior} shows that RENAULT improves REN on all counts. It wins on 17 games, gained $1.1\times$ and $1.23\times$ human mean and median normalized performance, as well as increasing the win count against Rainbow to 21 games. This demonstrates the significant benefit of augmenting ensembles with auxiliary tasks, at least in this case study. Moreover, this is achieved without any tuning to the auxiliary task hyperparameter $\alpha_{m,n}$; we simply set it to $1$ for all member $m$ and task $n$. We also simply distribute the auxiliary tasks as such that each member is augmented with one unique task. Careful tuning of the hyperparameter and task distribution may yield even better performance improvements.
	
	To understand the role of each auxiliary task, we analyze each of their contribution. Figure \ref{fig:aux-task-each} shows the contribution of each member of the ensemble that is endowed with a particular auxiliary task. It is interesting to see that although each task is weakly helpful (only offers modest performance improvement), they offer significant performance boost when combined with ensembles. The best performing auxiliary tasks in terms of games won are reward function, total change of intensity (CI), and change of moment (CM) prediction. This demonstrate the usefulness of our novel auxiliary tasks; we discuss this more in Appendix \ref{appendix:learning-reward-task}.
	
	In the opposite extreme, inverse dynamic (ID) seems to be less useful among the auxiliary tasks. Surprisingly, retraining RENAULT without ID reduces its performance substantially (see Appendix \ref{appendix:RENAULT-noID}). This suggests that ensemble improvements are not merely from individual gain, but also from diversity, through improved co-bias and covariance.
	
	\subsection{On Distributing the Auxiliary Tasks}
	
	Our theoretical result suggests that distributing the auxiliary tasks may be better than assigning all tasks on each member of the ensemble. To confirm this, we compare RENAULT with its variant which assigns \emph{all} auxiliary tasks to each member, RENAULT-all.
	
	Table \ref{tab:comparison-prior} shows that RENAULT-all performs worse than RENAULT, achieving lower mean and median human normalized score; this is in line with our expectation. While it may also be the case that suboptimal hyperparameters plays some roles in causing the performance degradation, this comparison is fair as we also did not perform tuning for RENAULT.
	
	Finally, RENAULT-all has larger ensemble bias and covariance compared to RENAULT in Table \ref{tab:measurement}. The larger ensemble bias could be because each network now has to optimize for more objectives. Propositions~\ref{proposition:ensemble-bias-decomposition} and \ref{proposition:ensemble-covariance} also suggest that RENAULT could be benefiting from reduced co-bias and covariance. The reduction could potentially be due to each member being less correlated when trained on the same dataset compared to RENAULT-all. 
	
	\section{Conclusions}
	
	In this work, we study ensembles and auxiliary tasks in the context of deep Q-learning. We proposed
	a simple agent that creates an ensemble of Q-functions based on Rainbow, and additionally augments it with auxiliary tasks. We provide theoretical analysis and an experimental case study. 
	Our methods improve significantly upon data-efficient Rainbow. We show that, although each auxiliary task only improves performance slightly, they significantly boost performance when combined using an ensemble.
	
	Our study focuses on the interaction between ensembles, auxiliary tasks, and DQN on learning. However, RL is a multi-faceted problem with many important components including exploration. Future work includes studying their interaction with exploration, which may provide important insights and answers to some of the questions which eludes our understanding in this work.
	
	\section*{Acknowledgements}
	We thank Lung Sin Kwee for useful discussions. This research is supported in part by the National Research Foundation, Singapore under its AI Singapore Program (AISG Award No: AISG2-RP-2020-016).

	%
	%
	%
	%
	\bibliographystyle{splncs04}
	\bibliography{references.bib}
	
	\newpage
	\appendix
	\setcounter{lemma}{0}
\setcounter{theorem}{0}
\setcounter{proposition}{0}

\section{Background} \label{appendix:background}

\subsection{Markov Decision Process}
A sequential decision problem is often modeled as a Markov Decision Process (MDP). An MDP is defined with a 5-tuple $<\sS,\sA,R,T,\gamma>$ where $\sS$ and $\sA$ denote the set of states and actions, $R$ and $T$ represent the reward and transition functions, and $\gamma \in [0,1)$ characterizes the discount factor of the MDP. Given an MDP, one can find an optimal policy $\pi^*$, such that following the policy, the return (sum of discounted rewards) $G = \sum_{t=1}^{\infty} \gamma^t R_t$ is maximized. When the reward function $R$ and the transition function $T$ are known, one can solve the MDP (i.e., find the optimal policy) by employing algorithms such as value iteration or planning. When they are not known, reinforcement learning (RL) can find the optimal policy by either learning the model and solving the MDP (model-based RL) or by directly learning the policy / value function (model-free RL).

\subsection{Deep Q Learning}
Deep Q Learning (DQN) \cite{mnih2015human} is one of the well known model-free RL algorithms. It is an instantiation of the Q learning algorithm, which directly learns a state-action value function of an MDP $Q^\pi(s,a)$ for a given state $s$ and action $a$. The goal of Q learning is to find the optimal Q-function $Q^*$ satisfying the Bellman optimality equation. 

DQN uses deep learning (neural networks) as a function approximation for $Q^\pi$. When parameterized by $\theta$ it is denoted as $Q(s,a;\theta)$. The Q function is used to select an action when interacting with the environment; of which the experience is accumulated in the replay buffer $\sD$. It learns to estimate the optimal Q-function by minimizing the following loss:
\begin{align}
\gL(\theta) = \E[(Q(s,a;\theta) - r + \gamma \max_{a'} Q(s',a';\theta'))^2]\label{eq/Q-loss-mse}
\end{align}
where $\E$ denotes the expectation over the realization of state, action, reward, and next state, i.e., $\E_{(s,a,r,s')\sim \sD}$ and $\theta'$ is the parameters of a target network (a past checkpoint of the network); target network is used to aid stability.

\section{Methods}
\subsection{The Value of Learning Objects and Events}

To illustrate the value of learning objects and events, we will have a look at two examples from our case study, the ATARI games. 

We take Freeway game as our first example. In Freeway, the objective is to help a chicken to cross a road that is packed with moving cars. The chicken has two actions: moving forward and backward. When a car hits the chicken, it will be thrown a few steps backward. Successfully crossing the road will gives a reward of one. In this case, learning objects implies learning the important factors corresponding to the assignment of rewards. Specifically, knowing the position of the chicken and each car allows the learner to efficiently learn the reward, and potentially the optimal policy.

The second example is Space Invaders games. In Space Invaders, the objective is to destroy all invading alien spaceships. We can move our spaceship in the horizontal axis and fire a laser canon; once fired, the laser will move slowly forward. If the alien spaceship is hit, then it will be destroyed; it will disappear from the screen. For each alien that we successfully hit, we will get a positive reward. In this case, knowing whether the alien is hit by the laser and thus is destroyed (disappeared from the screen) is important, as it defines the reward. This corresponds to knowing whether a certain event has occurred or not.

\section{Theoretical Analysis Proofs} \label{appendix:proofs}

Our bias-variance and bias-variance-covariance decompositions follow that of \cite{geman1992neural} and \cite{ueda1996generalization}.

\begin{lemma}
	Given random variable $X_m$, where $m = 1,...,M$, then the following inequality holds:
	\begin{align}
	\frac{1}{M(M-1)} \sum_{m' \ne m} \Cov(X_m, X_{m'}) \le \frac{1}{M} \sum_{m} \Var(X_m)
	\end{align}
	\label{lemma:cov-le-var}
\end{lemma}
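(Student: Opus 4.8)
The plan is to reduce the claimed averaged inequality to an elementary pairwise bound and then to a counting argument. The single key fact is that for any two random variables $X_m$ and $X_{m'}$ the variance of their difference is nonnegative, i.e.
\begin{align*}
0 \le \Var(X_m - X_{m'}) = \Var(X_m) + \Var(X_{m'}) - 2\,\Cov(X_m, X_{m'}),
\end{align*}
which rearranges to the pairwise bound $\Cov(X_m, X_{m'}) \le \tfrac{1}{2}\big(\Var(X_m) + \Var(X_{m'})\big)$. (Equivalently, one could invoke Cauchy--Schwarz, $\Cov(X_m,X_{m'}) \le \sqrt{\Var(X_m)\Var(X_{m'})}$, followed by AM--GM; the variance-of-difference route is cleaner since it needs no positivity assumption.) This holds for every ordered pair with $m \ne m'$.

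Next I would sum this pairwise bound over all ordered pairs $(m,m')$ with $m \ne m'$, of which there are exactly $M(M-1)$. The only work is the bookkeeping on the right-hand side: in the double sum $\sum_m \sum_{m' \ne m} \Var(X_m)$ the term $\Var(X_m)$ is counted once for each of the $M-1$ admissible values of $m'$, giving $(M-1)\sum_m \Var(X_m)$; by symmetry the companion sum $\sum_m \sum_{m' \ne m} \Var(X_{m'})$ equals the same quantity. Hence
\begin{align*}
\sum_{m}\sum_{m' \ne m} \Cov(X_m, X_{m'}) \le \tfrac{1}{2}\Big[(M-1)\sum_m \Var(X_m) + (M-1)\sum_m \Var(X_m)\Big] = (M-1)\sum_m \Var(X_m).
\end{align*}
Dividing both sides by $M(M-1)$ yields precisely $\frac{1}{M(M-1)}\sum_{m' \ne m}\Cov(X_m,X_{m'}) \le \frac{1}{M}\sum_m \Var(X_m)$, the desired statement, with the factor $\tfrac12$ from the pairwise bound cancelling against the doubled count of each variance term.

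There is no genuine obstacle here; the result is essentially the non-negativity of a variance dressed up with averaging. The only point that warrants care is the indexing convention: the left-hand sum $\sum_{m' \ne m}$ must be read as the double sum over both $m$ and $m'$ (so it ranges over all $M(M-1)$ ordered pairs), which is what makes the normalizing constants $M(M-1)$ and $M$ match up. If one instead interprets it as a single inner sum, the constants would be off; so I would state the ordered-pair reading explicitly at the outset to avoid ambiguity.
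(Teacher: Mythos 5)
Your proof is correct and follows essentially the same route as the paper's: both reduce the claim to the pairwise bound $\Cov(X_m,X_{m'}) \le \tfrac{1}{2}\big(\Var(X_m)+\Var(X_{m'})\big)$ and then sum over the $M(M-1)$ ordered pairs with the same counting. The only difference is cosmetic --- the paper obtains the pairwise bound via Cauchy--Schwarz followed by AM--GM, whereas you use the nonnegativity of $\Var(X_m - X_{m'})$, which you correctly note is equivalent.
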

\begin{proof}
	\begin{align}
	\Cov(X_m, X_{m'}) &\le \sqrt{\Var(X_m)\Var(X_{m'})} \label{eq:cauchy}\\
	&\le \frac{1}{2} \big( \Var(X_m) + \Var(X_{m'}) \big) \label{eq:GMAM}
	\end{align}
	Where \eqref{eq:cauchy} is due to Cauchy–Schwarz inequality and \eqref{eq:GMAM} is due to geometric mean $\le$ arithmetic mean. Thus,
	
	\begin{align}
	&\frac{1}{M(M-1)} \sum_{m' \ne m} \Cov(X_m, X_{m'}) \\
	&\le \frac{1}{M(M-1)} \sum_{m' \ne m} \frac{1}{2} \big( \Var(X_m) + \Var(X_{m'}) \big)\\
	&= \frac{1}{2M(M-1)} \sum_{m} \Var(X_m)\times 2(M-1) \\
	&= \frac{1}{M} \sum_{m} \Var(X_m) 
	\end{align}\qed
\end{proof}

Let,
\begin{align*}
\Var(f|X_0) &= \E\Big[ \big( f(X_0;\gR, Z^N) - \E[ f(X_0;\gR, Z^N) ] \big)^2 \Big] \\
\Bias(f|X_0) &= \E[ f(X_0;\gR, Z^N) ] - f^*(X_0)
\end{align*}
where $\E$ denotes the expectation $\E_{\gR, Z^N}$ and $\sigma^2 = \E_{X_0,Y_0}[(f^*(X_0)-Y_0)^2]$ is the irreducible error.

\begin{theorem}[Generalization error of random algorithm]
	The generalization error of the estimator $f$ can be decomposed as follows.
	\begin{align}
	GE(f) = E_{X_0}\big[ \Var(f|X_0) + \Bias(f|X_0)^2 \big] + \sigma^2.
	\end{align}
\end{theorem}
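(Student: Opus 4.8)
The plan is to apply the classical ``add and subtract'' device twice, exploiting the independence of the test noise from the training randomness. First I would condition on the test input $X_0$ and work inside the expectation over $Z_0$, $Z^N$, and $\gR$. Writing $f$ as shorthand for $f(X_0;\gR,Z^N)$, I would split the squared error by inserting $f^*(X_0)$:
\begin{align*}
(Y_0 - f)^2 = (Y_0 - f^*(X_0))^2 + 2(Y_0 - f^*(X_0))(f^*(X_0) - f) + (f^*(X_0) - f)^2.
\end{align*}

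The key structural fact I would use is that the test label $Y_0$ is generated by the fixed target $f^*$ corrupted by zero-mean noise that is independent of the training data $Z^N$ and the algorithmic randomness $\gR$. Taking the full expectation, the cross term factors as $2\,\E_{Y_0 | X_0}[Y_0 - f^*(X_0)]\cdot \E_{\gR,Z^N}[f^*(X_0)-f]$, and the first factor vanishes by the zero-mean noise assumption. Hence the cross term drops, and after taking $\E_{X_0}$ the first summand becomes exactly the irreducible error $\sigma^2$.

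It then remains to decompose $\E_{\gR,Z^N}[(f^*(X_0)-f)^2]$ into bias and variance. I would apply the same trick a second time, now inserting the mean prediction $\E[f]=\E_{\gR,Z^N}[f(X_0;\gR,Z^N)]$:
\begin{align*}
(f^*(X_0)-f)^2 = (f^*(X_0)-\E[f])^2 + 2(f^*(X_0)-\E[f])(\E[f]-f) + (\E[f]-f)^2.
\end{align*}
Taking $\E_{\gR,Z^N}$, the first summand is deterministic given $X_0$ and equals $\Bias(f|X_0)^2$; the cross term vanishes because $\E[\E[f]-f]=0$; and the last summand is by definition $\Var(f|X_0)$. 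Combining the two decompositions and integrating over $X_0$ yields the claimed identity.

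The main obstacle is not the algebra but the careful bookkeeping of which variables are integrated and in what order, together with the independence justification that kills the two cross terms. The crucial point is that the test pair $(X_0,Y_0)$ is drawn separately from the training process, so the noise $Y_0-f^*(X_0)$ is mean-zero and independent of $(\gR,Z^N)$; this is exactly what licenses the factorization of the first cross term. This independence assumption is what must be stated explicitly before the factorization is valid, and it is also what distinguishes the present randomized-algorithm statement from the standard single-dataset decomposition of \cite{geman1992neural}.
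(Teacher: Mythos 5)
Your proposal is correct and follows essentially the same argument as the paper's proof: a two-stage add-and-subtract decomposition pivoting on $f^*(X_0)$ and on the mean prediction $\overline{f}(X_0)=\E_{\gR,Z^N}[f(X_0;\gR,Z^N)]$, with the two cross terms killed by the conditional zero-mean of the noise given $X_0$ and by the definition of the mean predictor. The only difference is cosmetic: you peel off the irreducible error $\sigma^2$ first and then split bias from variance, whereas the paper extracts the variance term first and then separates bias from noise; the orthogonality facts invoked are identical.
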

\begin{proof}
	Let,
	\begin{align}
	f^*(X_0) &= \E_{Y_0|X_0}[Y_0]\\
	\overline{f}(X_0) &= \E_{Z^N, \gR}[f(X_0;\gR, Z^N)] \label{eq:expected-estimator}\\
	f_{\gR, Z^N}(X_0) &= f(X_0;\gR, Z^N)
	\end{align}
	Then, the generalization error can be written as,
	\begin{align}
	&GE(f)\\ 
	&= E_{Z_0, Z^N, \gR}\Big[ \big( f_{\gR, Z^N}(X_0) - Y_0) \big)^2 \Big] \\
	&= E_{Z_0, Z^N, \gR}\Bigg[ \Big( \big( f_{\gR, Z^N}(X_0) - \overline{f}(X_0) \big) + \big( \overline{f}(X_0) - Y_0 \big) \Big)^2\Bigg]\\
	\begin{split}
	&= E_{Z_0, Z^N, \gR}\Big[ \big( f_{\gR, Z^N}(X_0) - \overline{f}(X_0) \big)^2 \Big] \\
	&\hspace{1em} + 2 E_{Z_0, Z^N, \gR} \Big[ \big( f_{\gR, Z^N}(X_0) - \overline{f}(X_0) \big) \big( \overline{f}(X_0) - Y_0 \big) \Big]  \\
	&\hspace{1em} + E_{Z_0}\Big[ \big( \overline{f}(X_0) - Y_0 \big)^2 \Big]
	\end{split}\\
	&= E_{Z_0, Z^N, \gR}\Big[ \big( f_{\gR, Z^N}(X_0) - \overline{f}(X_0) \big)^2 \Big] + E_{Z_0}\Big[ \big(\overline{f}(X_0) - Y_0 \big)^2 \Big]
	\label{eq:step-zero1}
	\end{align}
	where \eqref{eq:step-zero1} is due to:
	\begin{align}
	&E_{Z_0, Z^N, \gR}\Big[ \big( f_{\gR, Z^N}(X_0) - \overline{f}(X_0) \big) \big( \overline{f}(X_0) - Y_0 \big) \Big] \\
	&= E_{Z_0}\Big[ E_{Z^N, \gR} \big[ f_{\gR, Z^N}(X_0)  - \overline{f}(X_0) \big] \big(\overline{f}(X_0) - Y_0 \big) \Big]\\
	&= E_{Z_0}\Bigg[ \Big( E_{Z^N, \gR} \big[ f_{\gR, Z^N}(X_0) \big] - \overline{f}(X_0) \Big) \Big(\overline{f}(X_0) - Y_0 \Big) \Bigg]\\
	&= E_{Z_0}\Big[ \big( \overline{f}(X_0) - \overline{f}(X_0) \big) \big( \overline{f}(X_0) - Y_0 \big) \Big]\\
	&= 0
	\end{align}
	
	The second term in the \eqref{eq:step-zero1} can be broken down as follows:
	\begin{align}
	&E_{Z_0}\Big[ \big( \overline{f}(X_0) - Y_0 \big)^2 \Big]\\
	&= E_{Z_0}\Bigg[ \Big( \big( \overline{f}(X_0) - f^*(X_0) \big) + \big( f^*(X_0) - Y_0 ) \Big)^2 \Bigg]\\
	\begin{split}
	&= E_{Z_0}\Big[ \big( \overline{f}(X_0) - f^*(X_0) \big)^2 \Big] \\
	&\hspace{1em}+ E_{Z_0}\Big[ \big( f^*(X_0) - Y_0 \big)^2 \Big] \\
	&\hspace{1em}+ 2 E_{Z_0}\Big[ \big( \overline{f}(X_0) - f^*(X_0) \big)  \big( f^*(X_0) - Y_0 \big) \Big]
	\end{split}
	\end{align}
	where the third term equals to $0$, as shown below,
	\begin{align}
	&E_{Z_0}\Big[ \big( \overline{f}(X_0) - f^*(X_0) \big)  \big( f^*(X_0) - Y_0 \big) \Big]\\
	&= E_{X_0,Y_0}\Big[ \big( \overline{f}(X_0) - f^*(X_0) \big)  \big( f^*(X_0) - Y_0 \big) \Big]\\
	&= E_{X_0}\Bigg[ \E_{Y_0|X_0} \Big[ \big( \overline{f}(X_0) - f^*(X_0) \big)  \big( f^*(X_0) - Y_0 \big) \Big] \Bigg]\\
	&= E_{X_0}\Big[ \big( \overline{f}(X_0) - f^*(X_0) \big)  \E_{Y_0|X_0} \big[ f^*(X_0) - Y_0 \big] \Big]\\
	&= E_{X_0}\Big[ \big( \overline{f}(X_0) - f^*(X_0) \big)  \big( f^*(X_0) - \E_{Y_0|X_0} [ Y_0 ] \big) \Big]\\
	&= E_{X_0}\Big[ \big( \overline{f}(X_0) - f^*(X_0) \big)  \big( f^*(X_0) - f^*(X_0) \big) \Big]\\
	&= 0
	\end{align}
	This gives us the decomposed generalization error:
	\begin{align}
	&GE(f) \\
	\begin{split}
	&= E_{Z_0, Z^N, \gR}\Big[ \big( f_{\gR, Z^N}(X_0) - \overline{f}(X_0) \big)^2 \Big]\\
	&\hspace{1em}+ E_{Z_0}\Big[ \big( \overline{f}(X_0) - f^*(X_0) \big)^2 \Big] \\
	&\hspace{1em}+ E_{Z_0}\Big[ \big( f^*(X_0) - Y_0 \big)^2 \Big]
	\end{split} \\
	&= E_{X_0} \big[ \Var(f|X_0) + \Bias(f|X_0)^2 \big] + \sigma^2
	\end{align}
	\qed
\end{proof}

Let,
\begin{align*}
\begin{split}
\Cov&(f_m, f_{m'}|X_0) = \\
\E\Big[ &\big( f(X_0;\gR^{(m)}, Z^N) - \E[ f(X_0;\gR^{(m)}, Z^N) ] \big) \\
&\big( f(X_0;\gR^{(m')}, Z^N) - \E[ f(X_0;\gR^{(m')}, Z^N) ] \big) \Big].
\end{split}
\end{align*}
\begin{align*}
\begin{split}
\overline{\Bias}(X_0) &= \frac{1}{M} \sum_{m=1}^M \Bias(f_m|X_0) \\
\overline{\Var}(X_0) &= \frac{1}{M} \sum_{m=1}^M \Var(f_m|X_0) \\
\overline{\Cov}(X_0) &= \frac{1}{M(M-1)} \sum_{m} \sum_{m'\ne m} \Cov(f_m, f_{m'}|X_0).
\end{split}
\end{align*}

\begin{theorem}[Generalization error of ensemble with random algorithm]
	The generalization error of the ensemble estimator $f_{ens}^{(M)}$ can be decomposed as:
	\begin{align}
	\begin{split}
	GE(f_{ens}^{(M)}) = \E_{X_0}\Big[ &\overline{\Bias}(X_0)^2 + \frac{1}{M} \overline{\Var}(X_0) \\
	&+ \big(1-\frac{1}{M}\big) \overline{\Cov} (X_0) \Big] + \sigma^2
	\end{split}
	\end{align}
\end{theorem}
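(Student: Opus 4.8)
The plan is to treat the ensemble average $f_{ens}^{(M)}$ as a single randomized estimator — whose randomness is now the joint draw of the $M$ independent random numbers $\gR^{(1)},\dots,\gR^{(M)}$ together with the shared dataset $Z^N$ — and to apply Theorem~\ref{theorem:mse-decomposition} to it verbatim. This immediately gives
\begin{align*}
GE(f_{ens}^{(M)}) = \E_{X_0}\big[\Var(f_{ens}^{(M)}|X_0) + \Bias(f_{ens}^{(M)}|X_0)^2\big] + \sigma^2,
\end{align*}
so the entire task reduces to rewriting the ensemble's conditional variance and conditional bias in terms of the per-member quantities $\overline{\Var}(X_0)$, $\overline{\Cov}(X_0)$, and $\overline{\Bias}(X_0)$.

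For the bias, I would simply use linearity of expectation. Since $f_{ens}^{(M)} = \frac{1}{M}\sum_m f_m$, we have $\E[f_{ens}^{(M)}(X_0)] = \frac{1}{M}\sum_m \E[f(X_0;\gR^{(m)},Z^N)]$; subtracting $f^*(X_0)$ distributes across the sum to yield $\Bias(f_{ens}^{(M)}|X_0) = \frac{1}{M}\sum_m \Bias(f_m|X_0) = \overline{\Bias}(X_0)$. This step requires no independence assumption — it is pure linearity — so the squared bias contributes exactly $\overline{\Bias}(X_0)^2$.

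For the variance, I would expand the variance of the average with the standard identity $\Var(\frac{1}{M}\sum_m f_m|X_0) = \frac{1}{M^2}[\sum_m \Var(f_m|X_0) + \sum_m\sum_{m'\ne m}\Cov(f_m,f_{m'}|X_0)]$. Substituting the definitions $\sum_m \Var(f_m|X_0) = M\,\overline{\Var}(X_0)$ and $\sum_m\sum_{m'\ne m}\Cov(f_m,f_{m'}|X_0) = M(M-1)\,\overline{\Cov}(X_0)$ and simplifying the prefactors gives $\Var(f_{ens}^{(M)}|X_0) = \frac{1}{M}\overline{\Var}(X_0) + (1-\frac{1}{M})\overline{\Cov}(X_0)$. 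Plugging both pieces into the decomposition above and taking $\E_{X_0}$ closes the argument.

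The computation is essentially mechanical, so there is no deep obstacle; the only place requiring care is the bookkeeping of the normalizing constants $M^2$, $M$, and $M(M-1)$ when matching the raw sums to the barred averages. The one conceptual point worth flagging — and the reason this result departs from the classical decomposition of \cite{ueda1996generalization} — is that the cross term does \emph{not} vanish here: because every member shares the single dataset $Z^N$ and differs only through $\gR^{(m)}$, the covariances $\Cov(f_m,f_{m'}|X_0)$ are generally nonzero, which is precisely why the $(1-\frac{1}{M})\overline{\Cov}(X_0)$ term persists in the final expression.
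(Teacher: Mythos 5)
Your proof is correct and follows essentially the same route as the paper: apply the single-estimator bias--variance decomposition (Theorem~\ref{theorem:mse-decomposition}) to the ensemble, obtain $\Bias(f_{ens}^{(M)}|X_0)=\overline{\Bias}(X_0)$ by linearity, and expand the variance of the average into $\frac{1}{M}\overline{\Var}(X_0)+\bigl(1-\frac{1}{M}\bigr)\overline{\Cov}(X_0)$. One small quibble with your closing remark: the classical decomposition of \cite{ueda1996generalization} also retains a covariance term; the departure here is not that the cross term survives, but that the expectations defining $\Var$, $\Cov$, and $\Bias$ are taken over a single shared dataset $Z^N$ together with independent random seeds $\gR^{(m)}$, rather than over independently drawn datasets.
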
 
\begin{proof}
	Let $\gR^{[1,M]}$ denotes $\gR^{(1)}, ..., \gR^{(M)}$, the variance of the ensemble can be decomposed as follows,
	\begin{align}
	&\Var(f_{ens}^{(M)}|X_0) \\
	&= \E_{Z^N, \gR^{[1,M]}}\Big[ \frac{1}{M} \sum_{m=1}^M f_m - \E_{Z^N, \gR^{[1,M]}} \big[ \frac{1}{M} \sum_{m=1}^M f_m \big]  \Big]\\
	\begin{split}
	&= \frac{1}{M^2} \sum_{m=1}^M \E_{Z^N, \gR^{(m)}}\Big[ \big( f_m - \E_{Z^N,  \gR^{(m)}}[f_m] \big)^2  \Big] \\
	&\hspace{1em}+ \frac{1}{M^2} \sum_{m} \sum_{m' \ne m} \E_{Z^N,  \gR^{(m)}, \gR^{(m')}}\Big[  
	\big( f_m - \E_{Z^N, \gR^{(m)}}[ f_m] \big) \\
	&\hspace{15em} \big( f_{m'} - \E_{Z^N, \gR^{(m')}}[ f_{m'}] \big) \Big] 
	\end{split}\\
	\begin{split}
	&= \Big( \frac{1}{M} \Big) \frac{1}{M} \sum_{m=1}^M \Var(f_m|X_0) \\
	&\hspace{1em} +\Big(1-\frac{1}{M}\Big) \frac{1}{M(M-1)} \sum_{m} \sum_{m'\ne m} \Cov(f_m, f_{m'}|X_0)
	\end{split}\\
	&= \frac{1}{M} \overline{\Var}(X_0) + \big(1-\frac{1}{M}\big) \overline{\Cov} (X_0)
	\end{align}
	As for the bias,
	\begin{align}
	\Bias(f_{ens}^{(M)}|X_0) &= \E_{Z^N, \gR^{[1,M]}}\Big[ \frac{1}{M} \sum_{m=1}^M f_m \Big] - f^*\\
	&= \frac{1}{M} \sum_{m=1}^M \E_{Z^N, \gR^{(m)}}\Big[ f_m - f^* \Big]\\
	&= \frac{1}{M} \sum_{m=1}^M \Bias(f_m|X_0) \\
	&= \overline{\Bias}(X_0)
	\end{align}
	Combining the two terms gives the $GE(f_{ens}^{(M)})$ as required.
	\qed
\end{proof}

\begin{theorem}
	$\overline{\Var}(X_0) \le \overline{\Cov}(X_0)$. Hence, if ensemble estimator $f_{ens}^{(M)}$ consists of $M$ identical estimators $f$ that differ only in the random numbers used, then $GE(f_{ens}^{(M)}) \le GE(f)$.
\end{theorem}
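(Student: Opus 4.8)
The plan is to reduce the first claim directly to Lemma~\ref{lemma:cov-le-var}, which is already available. First I would instantiate that lemma with the collection of random variables $X_m = f(X_0;\gR^{(m)}, Z^N)$ for $m = 1,\dots,M$, holding the test input $X_0$ fixed and taking every expectation over the joint draw of $(\gR^{(1)},\dots,\gR^{(M)}, Z^N)$. With this identification the lemma's left-hand side is exactly $\overline{\Cov}(X_0)$ and its right-hand side is exactly $\overline{\Var}(X_0)$, by the definitions stated just before Theorem~\ref{theorem:ensemble-decomposition}. This immediately yields $\overline{\Cov}(X_0) \le \overline{\Var}(X_0)$, which is the inequality the rest of the argument needs.

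For the ``Hence'' part I would specialize the ensemble decomposition of Theorem~\ref{theorem:ensemble-decomposition} to the case where every member is produced by the same algorithm $\gA$ on the same data $Z^N$, differing only through the i.i.d.\ draws $\gR^{(m)}$. Under this symmetry each $f(X_0;\gR^{(m)}, Z^N)$ is identically distributed, so $\Var(f_m|X_0) = \Var(f|X_0)$ and $\Bias(f_m|X_0) = \Bias(f|X_0)$ for every $m$; averaging gives $\overline{\Var}(X_0) = \Var(f|X_0)$ and $\overline{\Bias}(X_0) = \Bias(f|X_0)$. Substituting these into Theorem~\ref{theorem:ensemble-decomposition} and subtracting the single-estimator decomposition of Theorem~\ref{theorem:mse-decomposition} leaves
\[
GE(f_{ens}^{(M)}) - GE(f) = \E_{X_0}\Big[ \big(1-\tfrac{1}{M}\big)\big(\overline{\Cov}(X_0) - \overline{\Var}(X_0)\big) \Big],
\]
since the bias and irreducible-error terms cancel and the $\tfrac{1}{M}\overline{\Var}$ term combines with the $\overline{\Var}$ from $GE(f)$. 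As $1 - \tfrac{1}{M} \ge 0$ and $\overline{\Cov}(X_0) - \overline{\Var}(X_0) \le 0$ pointwise by the first part, the expectation is nonpositive, giving $GE(f_{ens}^{(M)}) \le GE(f)$.

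The only genuinely delicate point is the reduction in the second part: I must verify that the symmetry assumption really does make the members identically distributed, so that the per-member biases and variances are all equal and the averages collapse to the single-estimator quantities. This rests on the $\gR^{(m)}$ being i.i.d.\ and the dataset $Z^N$ being shared, so that each $f_m$ is a genuine copy of one randomized estimator; everything after that is routine algebra. I would also flag that the inequality as printed in the statement, $\overline{\Var}(X_0) \le \overline{\Cov}(X_0)$, is reversed relative to both Lemma~\ref{lemma:cov-le-var} and what the conclusion requires, and should read $\overline{\Cov}(X_0) \le \overline{\Var}(X_0)$.
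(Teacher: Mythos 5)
Your proposal is correct and follows essentially the same route as the paper's own proof: invoke Lemma~\ref{lemma:cov-le-var} to get $\overline{\Cov}(X_0) \le \overline{\Var}(X_0)$, then use the fact that identical members make $\overline{\Var}(X_0) = \Var(f|X_0)$ and $\overline{\Bias}(X_0) = \Bias(f|X_0)$ so the ensemble decomposition is dominated by the single-estimator one. You are also right that the inequality as printed in the theorem statement is reversed relative to both the lemma and the paper's own proof, which establishes $\overline{\Cov}(X_0) \le \overline{\Var}(X_0)$.
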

\begin{proof}
	$\overline{\Cov} (X_0)\leq \overline{\Var}(X_0)$ is due to Lemma \ref{lemma:cov-le-var}. Thus, the sum of the second and the third term of $GE(f_{ens}^{(M)})$ is bounded by $\overline{\Var}(X_0)$. As $f_{ens}^{(M)}$ consists of $M$ identical estimators $f$, $\overline{\Var}(X_0) = \frac{1}{M} \sum_{m=1}^M \Var(f_m|X_0) = \Var(f|X_0)$, thus $GE(f_{ens}^{(M)}) \le GE(f)$ as required.
	\qed
\end{proof}

\begin{proposition}
	The $\overline{\Bias}(X_0)^2$ of ensemble estimator $f_{ens}^{(M)}$ can be decomposed as follows ($X_0$ is omitted for readability),
	\begin{align}
	\frac{1}{M^2} \Big[ \sum_{m}^M \Bias(f_m)^2 + \sum_{m} \sum_{m'\ne m} \Cob(f_m,f_m') \Big]
	\end{align}
	where $\Cob(f_m,f_m'|X_0)=\Bias(f_m|X_0)\Bias(f_{m'}|X_0)$ denote the product of bias; we refer to this as co-bias.
\end{proposition}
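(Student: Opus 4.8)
The plan is to prove this identity by a direct algebraic expansion of the square of the averaged bias; no probabilistic argument is needed beyond the definitions already introduced. First I would start from the definition $\overline{\Bias}(X_0) = \frac{1}{M} \sum_{m=1}^M \Bias(f_m|X_0)$, square both sides, and immediately pull the constant factor out of the square to write
\begin{align*}
\overline{\Bias}(X_0)^2 = \frac{1}{M^2} \Big( \sum_{m=1}^M \Bias(f_m|X_0) \Big)^2 .
\end{align*}

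Next I would expand the square of the finite sum using the standard identity $\big(\sum_m a_m\big)^2 = \sum_m a_m^2 + \sum_m \sum_{m' \ne m} a_m a_{m'}$, taking $a_m = \Bias(f_m|X_0)$. This splits the expression into the diagonal contribution $\sum_m \Bias(f_m|X_0)^2$, coming from the terms with $m = m'$, and the off-diagonal contribution $\sum_m \sum_{m' \ne m} \Bias(f_m|X_0)\Bias(f_{m'}|X_0)$, coming from the distinct pairs. The final step is purely definitional: I would recognize each off-diagonal product $\Bias(f_m|X_0)\Bias(f_{m'}|X_0)$ as the co-bias $\Cob(f_m,f_{m'}|X_0)$, so that substituting back yields exactly the claimed decomposition with the $1/M^2$ prefactor intact.

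Since the entire argument is an expansion of a square of a sum followed by a renaming, I do not expect any genuine obstacle. The only point requiring a little care is the bookkeeping when separating the double sum into its diagonal ($m=m'$) and off-diagonal ($m \ne m'$) parts, and confirming that the off-diagonal block matches the co-bias definition term for term; this is routine but is the one place where an index error could creep in.
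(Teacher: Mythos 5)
Your proposal is correct and follows essentially the same route as the paper's own proof: square the averaged bias, expand the square of the finite sum into diagonal and off-diagonal terms, and identify the cross terms with the co-bias definition. No gaps.
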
 
\begin{proof}
	\begin{align}
	&\overline{\Bias}(X_0)^2\\
	&= \big( \frac{1}{M} \sum_{m}^M \Bias(f_m|X_0) \big)^2\\
	&= \frac{1}{M^2} \Big[ \sum_{m}^M \Bias(f_m|X_0)^2 + \sum_{m} \sum_{m\ne m'} \Bias(f_m|X_0)\Bias(f_{m'}|X_0) \Big]\\
	&= \frac{1}{M^2} \Big[ \sum_{m} \Bias(f_m|X_0)^2 + \sum_{m} \sum_{m'\ne m} \Cob(f_m,f_m'|X_0) \Big]
	\end{align}
	\qed
\end{proof}

\begin{proposition}
	Let $\bar{f}_{m,Z} = \E_{\gR^{(m)}}[ f_m | Z^N ]$ be a conditional expectation of $f_m$ over random number $\gR^{(m)}$ conditioned on $Z^N$. If the estimators are trained independently, then, $Cov(f_m,f_m')$ can be expressed as $E_{Z^N}[ (\bar{f}_{m,Z^N}-E[f_m])(\bar{f}_{m',Z^N}-E[f_{m'}]) ]$.
\end{proposition}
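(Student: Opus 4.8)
The plan is to start from the definition of covariance stated just above, namely
\begin{align*}
\Cov(f_m, f_{m'}|X_0) = \E\Big[ \big( f_m - \E[f_m] \big)\big( f_{m'} - \E[f_{m'}] \big) \Big],
\end{align*}
where the outer $\E$ runs jointly over $\gR^{(m)}$, $\gR^{(m')}$, and the \emph{shared} dataset $Z^N$, and then to peel off the data expectation using the tower property. First I would condition on $Z^N$ and rewrite the covariance as $\E_{Z^N}\big[\, \E_{\gR^{(m)}, \gR^{(m')}}[\,\cdot \mid Z^N]\,\big]$, isolating the inner expectation over the two (independent) random numbers with the dataset held fixed. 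Here I keep the dependence on $X_0$ implicit, exactly as in the statement.

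The crux is the factorization of the inner conditional expectation. Given the common dataset $Z^N$, the estimator $f_m$ depends on the remaining randomness only through $\gR^{(m)}$, and $f_{m'}$ only through $\gR^{(m')}$. Because the members are trained independently, $\gR^{(m)}$ and $\gR^{(m')}$ are independent draws, so conditioned on $Z^N$ the two centered quantities are conditionally independent and the expectation of their product splits into a product of expectations. Each factor then simplifies, since $\E[f_m]$ is a constant with respect to $\gR^{(m)}$, to $\E_{\gR^{(m)}}[f_m \mid Z^N] - \E[f_m] = \bar{f}_{m,Z^N} - \E[f_m]$, and symmetrically for $m'$. Pulling the resulting product back under $\E_{Z^N}$ yields the claimed expression $\E_{Z^N}\big[ (\bar{f}_{m,Z^N}-\E[f_m])(\bar{f}_{m',Z^N}-\E[f_{m'}]) \big]$.

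The main obstacle is purely one of carefully justifying the factorization step: I must argue that ``trained independently'' is precisely the statement $\gR^{(m)} \perp \gR^{(m')}$ and that the dataset $Z^N$ is the \emph{only} channel of coupling between the two members, so that conditioning on $Z^N$ renders $f_m$ and $f_{m'}$ conditionally independent. Once this conditional independence is in place, the remaining manipulations are routine applications of linearity of expectation and the tower property, and no further structural assumptions are needed. It is worth remarking that this is exactly the term that vanishes when the datasets themselves are independent across members, recovering the usual intuition that the residual covariance here is entirely due to the shared data.
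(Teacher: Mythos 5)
Your proposal is correct and follows essentially the same route as the paper's own proof: apply the tower property to condition on $Z^N$, factor the inner expectation over $\gR^{(m)}$ and $\gR^{(m')}$ using their independence, and simplify each factor to $\bar{f}_{m,Z^N}-\E[f_m]$ since $\E[f_m]$ is constant. The conditional-independence justification you flag as the crux is exactly the step the paper invokes (``due to independence of random variable''), so there is no substantive difference.
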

\begin{proof}
	Let $\E[ f_{m} ] = \E_{\gR^{(m)}, Z^N}[ f(X_0;\gR^{(m)}, Z^N) ]$,
	\begin{align}
	\Cov(f_m, &f_{m'}|X_0)  \\
	\begin{split}
	=\E_{Z^N}\Bigg[ &\E_{R^{(m)}, R^{(m')}} \Big[ \big( f(X_0;\gR^{(m)}, Z^N) - \E[ f_m ] \big) \\
	&\big( f(X_0;\gR^{(m')}, Z^N) - \E[ f_{m'} ] \big) \Big] \Bigg]
	\end{split} \label{eq:cov-independent1}\\
	\begin{split}
	=\E_{Z^N}\Bigg[ 
	&\E_{R^{(m)}} \Big[ \big( f(X_0;\gR^{(m)}, Z^N) - \E[ f_m ] \big) \Big]\\
	&\E_{R^{(m')}}\Big[ \big( f(X_0;\gR^{(m')}, Z^N) - \E[ f_{m'} ] \big) \Big] \Bigg]
	\end{split} \label{eq:cov-independent2}\\
	\begin{split}
	=\E_{Z^N}\Bigg[ 
	&\Big( \E_{R^{(m)}} \big[ f(X_0;\gR^{(m)}, Z^N) \big] - \E[ f_m ] \Big) \\
	&\Big( \E_{R^{(m')}} \big[ f(X_0;\gR^{(m')}, Z^N) \big] - \E[ f_{m'} ] \Big) \Bigg]
	\end{split}\\
	\begin{split}
	=\E_{Z^N}\Big[ 
	&\big( \E_{\gR^{(m)}}[ f_m | Z^N ] - \E[ f_m ] \big) \\
	&\big( \E_{\gR^{(m')}}[ f_{m'} | Z^N ] - \E[ f_{m'} ] \big) \Big]
	\end{split}\\
	\begin{split}
	=\E_{Z^N}\Big[ \big(
	&\bar{f}_{m,Z} - \E[ f_m ] \big) \big( \bar{f}_{m',Z} - \E[ f_{m'} ] \big) \Big]
	\end{split}
	\end{align}
	where \eqref{eq:cov-independent2} is due to independence of random variable.
	\qed
\end{proof}

\section{Bias-Variance-Covariance Measurements} \label{appendix:bias-var-cov-measure}
Measuring bias requires the optimal Q-function which is unknown in RL. Thus, we opt to measure the following value as a proxy,
\begin{align}
\widehat{\Bias}(\theta) = \E\Big[Q(s,a;\theta) - G(s,a,r,s') \Big]
\end{align}
where $G(s,a,r,s') = \E_{\theta'}[ r + \gamma *  \max_{a'} Q(s',a'; \theta') ]$. 

Since we are dealing with ensembles, we can measure the ensemble bias, variance, and covariance following,
\begin{align*}
\begin{split}
\overline{\Bias}(X_0) &= \frac{1}{M} \sum_{m=1}^M \Bias(f_m|X_0) \\
\overline{\Var}(X_0) &= \frac{1}{M} \sum_{m=1}^M \Var(f_m|X_0) \\
\overline{\Cov}(X_0) &= \frac{1}{M(M-1)} \sum_{m} \sum_{m'\ne m} \Cov(f_m, f_{m'}|X_0).
\end{split} \label{eq:ensemble-bias-var-covar}
\end{align*}

We substitute the measurement for the ensemble bias with $\widehat{\overline{\Bias}} = \frac{1}{M}\sum_m \widehat{\Bias}(\theta_m)$.

To obtain each value, we first gather 1000 unseen transitions, i.e., transitions not in the training buffer (off-buffer transitions), following the epsilon-greedy policy of the measured Q-function; this is done for each game. For a given game, we compute the measurement for each individual transition. If the expected estimator (\eqref{eq:expected-estimator}) is required, we compute it by averaging the output of all runs for that particular transition. Finally, for each value, we take the average across the 1000 transitions, followed by averaging across 26 games. 

\section{Additional Experiment Details} \label{appendix:experimental-details}

\subsection{Auxiliary Tasks Architecture} \label{appendix:auxiliary-task-architecture}
Each auxiliary task adopts different architecture described as follows:
\begin{itemize}
	\item \textbf{Latent State Transition} consists of a single 64 channel 3x3 same-padding convolution layer with skip connection and ReLU activation followed by one resnet block of two 64 channel 3x3 same-padding  convolution layers. The action is encoded as a channel which is concatenated with the state.
	\item \textbf{Inverse Dynamic} (\textbf{Change of moment}, \textbf{Total change of intensity}) consists of 2 layer neural networks with 576 and 256 neurons, respectively. The hidden layer uses ReLU activation function.
	\item \textbf{Reward Function} consists of a single layer networks with 512 input neurons. We use 3 atoms and $\sigma=0.1$ for our distributional loss (chosen arbitrarily).
	\item \textbf{Total change of intensity}'s distributional loss uses 84 atoms and maintain a ratio of $0.5$ following \cite{imani2018improving}. 
\end{itemize}

\subsection{Prioritized Experience Replay}
Although we maintain $m$ prioritized replay buffer, the transitions are shared across buffers, allowing for low memory footprint. 

\subsection{ATARI environment}
The ATARI environment is available from the OpenAI ATARI-Py\footnote{https://github.com/openai/atari-py}.

\textbf{Preprocessing. } We follow the preprocessing step of \cite{van2019use}. First, we convert the RGB image into a greyscale image and downsample it to $84 \times 84$ pixels. We repeat the same action for 4 frames, and take the max pooling of the last two frames. We define our state as a concatenation of four of such max-pooled frame over the last four environment steps. We also clipped our reward within the range of $[-1,1]$.

\subsection{Hyperparameters and Random Seeds}
All hyperparamaters are given in Table \ref{tab:hyperparameters}. In our preliminary experiment, we found that $M < 5$ degrades performance and higher $M$ does not increase performance significantly while requiring more resources. Thus we fix $M=5$ throughout the experiment. Apart from that, we did not perform any hyperparameter search.

All experiments are run three times with the following random seeds: $\{123, 234, 345\}$. 

\subsection{Algorithm Runtime}

The runtime of the algorithm for one game is given in Table \ref{tab:runtime}. $T$ is around one hour when using the machines in \ref{computing-infrastructure}.

\begin{table}[htp]\centering
	\caption{Algorithm Runtime.}\label{tab:runtime}
	\scriptsize
	\begin{tabular}{lr}\toprule
		\textbf{Name} &\textbf{Runtime ($T$)} \\\midrule
		Rainbow & $T$ \\
		REN-J & $4T$ \\
		REN & $5T$ \\
		RENAULT & $7T$ \\
		RENAULT-all & $10T$ \\
		\bottomrule
	\end{tabular}
\end{table}

\begin{table}[!htp]\centering
	\caption{Hyperparameters of Rainbow, REN, REN-J, RENAULT, and RENAULT-all. NS: latent next state, ID: inverse dynamic, RF: reward function, CI: total change of intensity, and CM: change of moment. The Rainbow hyperparameters is adopted from \cite{van2019use}}\label{tab:hyperparameters}
	\scriptsize
	\begin{xltabular}{\textwidth}{lX}\toprule
		\textbf{Hyperparameter} &\textbf{Setting (All)} \\\midrule
		Grey-scaling &TRUE \\
		Observation down-sampling &(84, 84) \\
		Frames stacked &4 \\
		Action repetitions &4 \\
		Reward clipping &[-1, 1] \\
		Terminal on loss of life &TRUE \\
		Max frames per episode &108K \\
		Update &Distributional Double Q \\
		Target network update period &every 2000 updates \\
		Support of Q-distribution &51 bins \\
		Discount factor &0.99 \\
		Minibatch size &32 \\
		Optimizer &Adam \\
		Optimizer: first moment decay &0.9 \\
		Optimizer: second moment decay &0.999 \\
		Optimizer: e &0.00015 \\
		Max gradient norm &10 \\
		Priority exponent &0.5 \\
		Priority correction &0.4 $\rightarrow$ 1 \\
		Noisy nets parameter &0.1 \\
		Training frames &400,000 \\
		Min replay size for sampling &1600 \\
		Memory size &unbounded \\
		Replay period every &1 steps \\
		Multi-step return length &20 \\
		Q network: channels &32, 64 \\
		Q network: filter size &5 × 5, 5 × 5 \\
		Q network: stride &5, 5 \\
		Q network: hidden units &256 \\
		Optimizer: learning rate &0.0001 \\
		\midrule
		\textbf{Hyperparameter} &\textbf{(REN, RENAULT, RENAULT-all)} \\
		Size of ensemble (\# members) &5 \\
		Number of priority replay &5 \\
		Independent training of members &TRUE \\
		\midrule
		\textbf{Hyperparameter} &\textbf{(REN-J)} \\
		Size of ensemble (\# members) &5 \\
		Number of priority replay &1 \\
		Independent training of members &FALSE \\
		\midrule
		\textbf{Hyperparameter} &\textbf{(RENAULT)} \\
		Strength parameter ($\alpha_{m,n}$ for all member $m$ and task $n$) &1 \\
		Auxiliary tasks &(NS), (ID), (RF), (CI), (CM) \\
		\midrule
		\textbf{Hyperparameter} &\textbf{(RENAULT-all)} \\
		Strength parameter ($\alpha_{m,n}$ for all member $m$ and task $n$) &1/5 \\
		Auxiliary tasks &\seqsplit{(NS, ID, RF, CI, CM), (NS, ID, RF, CI, CM), (NS, ID, RF, CI, CM), (NS, ID, RF, CI, CM), (NS, ID, RF, CI, CM)} \\
		\bottomrule
	\end{xltabular}
\end{table}

\subsection{Computing Infrastructure} \label{computing-infrastructure}

We use the computing infrastructure described in Table \ref{tab:computing-infra}.

\subsection{Source Code}

Our code is open source and available at \url{https://github.com/NUS-LID/RENAULT}. The instruction on how to use the code is given in the \emph{README.md}. Our implementation is based on \url{https://github.com/Kaixhin/Rainbow}. 

\begin{table}[!htp]\centering
	\caption{Computing Infrastructure}\label{tab:computing-infra}
	\scriptsize
	\begin{tabular}{lrrrr}\toprule
		\textbf{ID} &\textbf{CPU} &\textbf{GPU} &\textbf{Num Machines}  \\\midrule
		M1 &Intel(R) Xeon(R) Gold 6240 CPU @ 2.60GHz &NVIDIA GeForce RTX 2080 Ti &2 \\
		M2 &Intel(R) Xeon(R) Silver 4116 CPU @ 2.10GHz &NVIDIA TITAN RTX &5 \\
		M3 &Intel(R) Xeon(R) Silver 4116 CPU @ 2.10GHz &NVIDIA Tesla T4 &5 \\
		\bottomrule
	\end{tabular}
\end{table}

\section{Additional Results}

\subsection{On Learning Reward Related Tasks} \label{appendix:learning-reward-task}

It is interesting to point out that one can view CI and CM as tasks to help learn feature useful for predicting reward; change in object positions and triggering of events are often associated with reward. Along with the significant gain afforded by the reward function prediction task, these potentially suggest that task related to reward learning is generally beneficial for efficiently learning Q-functions. 

\subsection{RENAULT without Inverse Dynamic} \label{appendix:RENAULT-noID}

Inverse dynamic (ID) seems to be less useful among all the auxiliary tasks. To understand whether ID plays a role in the ensemble, we remove it from RENAULT and retrain the ensemble from scratch. We denote this variant as \emph{RENAULT-noID}. Figure \ref{fig:mean-median-RENAULT-vs-RENAULT-noID} shows the performance different between RENAULT and RENAULT-noID. We can see that removing the ID task is detrimental to the performance of RENAULT. This might suggests that the auxiliary tasks are all useful, and the gain afforded by them is possibly due to increased diversity; not only from individual gain of performance.

\subsection{Detailed Performance}

The detailed performance of each method is given in Table \ref{tab:detail-REN}, \ref{tab:detail-RENAULT}, \ref{tab:detail-REN-J}, \ref{tab:detail-RENAULT-all}, and \ref{tab:detail-RENAULT-noID}. Additionally, the detail of the performance for each auxiliary task is given in Table \ref{tab:detail-Aux}.

\begin{figure}[]
	\scriptsize
	\begin{center}
		\pgfplotstableread[row sep=\\,col sep=&]{
			interval & RENAULT  & RENAULT (-ID)   \\
			Mean      & 45.64 & 38.57 \\
			Median & 25.08 & 21.62 \\
		}\plotdata
		\begin{tikzpicture}
		\begin{axis}[
		width=.8\textwidth,
		height=.5\textwidth,
		ymax=60,
		ymin=20,
		yticklabel style={/pgf/number format/fixed},
		ylabel={Score (\%)},
		xtick=data,
		tickwidth=0,
		xticklabels from table={\plotdata}{interval},
		tickwidth=0,
		bar width=0.3,
		ybar=2pt,
		enlarge x limits={abs=0.55},
		nodes near coords,
		nodes near coords style={font=\tiny},
		legend pos=north east,
		legend style={legend columns=-1},
		]
		\addplot table [x expr=\coordindex,y=RENAULT]{\plotdata};
		\addplot table [x expr=\coordindex,y=RENAULT (-ID)]{\plotdata};
		\legend{RENAULT, RENAULT-noID}
		\end{axis}
		\end{tikzpicture}
		
		\caption{Human normalized mean and median score of RENAULT with and without inverse dynamic (RENAULT-noID).}
		\label{fig:mean-median-RENAULT-vs-RENAULT-noID}
	\end{center}
\end{figure}
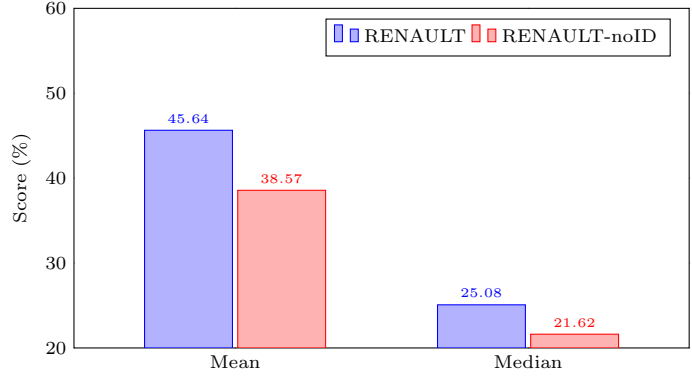

\begin{table}[!htp]\centering
	\caption{REN}\label{tab:detail-REN}
	\scriptsize
	\begin{tabular}{lrrrrrr}\toprule
		&\multicolumn{3}{c}{\textbf{Runs}} &\multirow{2}{*}{\textbf{Mean}} &\multirow{2}{*}{\textbf{Std}} \\\cmidrule{2-4}
		&\textbf{1} &\textbf{2} &\textbf{3} & & \\\midrule
		alien &822.0 &949.0 &715.0 &828.7 &117.1 \\
		amidar &221.0 &179.3 &185.8 &195.4 &22.4 \\
		assault &574.0 &618.4 &633.0 &608.5 &30.7 \\
		asterix &330.0 &505.0 &900.0 &578.3 &292.0 \\
		bank\_heist &56.0 &61.0 &73.0 &63.3 &8.7 \\
		battle\_zone &16900.0 &3300.0 &32300.0 &17500.0 &14509.3 \\
		boxing &22.0 &2.6 &8.1 &10.9 &10.0 \\
		breakout &2.4 &6.1 &2.6 &3.7 &2.1 \\
		chopper\_command &1070.0 &550.0 &520.0 &713.3 &309.2 \\
		crazy\_climber &19300.0 &4570.0 &25700.0 &16523.3 &10835.2 \\
		demon\_attack &742.5 &531.5 &1004.0 &759.3 &236.7 \\
		freeway &30.3 &29.2 &27.2 &28.9 &1.6 \\
		frostbite &2777.0 &3300.0 &1446.0 &2507.7 &955.9 \\
		gopher &186.0 &276.0 &278.0 &246.7 &52.5 \\
		hero &4277.5 &2995.0 &4179.0 &3817.2 &713.7 \\
		jamesbond &245.0 &1030.0 &280.0 &518.3 &443.5 \\
		kangaroo &1120.0 &600.0 &540.0 &753.3 &319.0 \\
		krull &3329.1 &3297.3 &2688.9 &3105.1 &360.8 \\
		kung\_fu\_master &7220.0 &23200.0 &7310.0 &12576.7 &9200.2 \\
		ms\_pacman &2286.0 &961.0 &1241.0 &1496.0 &698.3 \\
		pong &-14.8 &-18.2 &-17.4 &-16.8 &1.8 \\
		private\_eye &0.0 &100.0 &100.0 &66.7 &57.7 \\
		qbert &540.0 &3040.0 &705.0 &1428.3 &1398.2 \\
		road\_runner &12300.0 &11010.0 &11030.0 &11446.7 &739.1 \\
		seaquest &564.0 &686.0 &618.0 &622.7 &61.1 \\
		up\_n\_down &2968.0 &3650.0 &4086.0 &3568.0 &563.5 \\
		\bottomrule
	\end{tabular}
\end{table}

\begin{table}[!htp]\centering
	\caption{RENAULT}\label{tab:detail-RENAULT}
	\scriptsize
	\begin{tabular}{lrrrrrr}\toprule
		&\multicolumn{3}{c}{\textbf{Runs}} &\multirow{2}{*}{\textbf{Mean}} &\multirow{2}{*}{\textbf{Std}} \\\cmidrule{2-4}
		&\textbf{1} &\textbf{2} &\textbf{3} & & \\\midrule
		alien &934.0 &688.0 &1029.0 &883.7 &176.0 \\
		amidar &245.7 &179.7 &247.8 &224.4 &38.7 \\
		assault &648.9 &663.6 &641.6 &651.4 &11.2 \\
		asterix &605.0 &510.0 &780.0 &631.7 &137.0 \\
		bank\_heist &155.0 &90.0 &130.0 &125.0 &32.8 \\
		battle\_zone &19100.0 &9400.0 &14200.0 &14233.3 &4850.1 \\
		boxing &5.5 &6.9 &3.0 &5.1 &2.0 \\
		breakout &0.4 &7.0 &2.9 &3.4 &3.3 \\
		chopper\_command &1050.0 &970.0 &670.0 &896.7 &200.3 \\
		crazy\_climber &44820.0 &41370.0 &32190.0 &39460.0 &6528.0 \\
		demon\_attack &695.5 &586.0 &797.5 &693.0 &105.8 \\
		freeway &30.0 &29.5 &28.5 &29.3 &0.8 \\
		frostbite &1657.0 &255.0 &1719.0 &1210.3 &827.9 \\
		gopher &712.0 &396.0 &520.0 &542.7 &159.2 \\
		hero &7370.0 &4832.5 &7504.0 &6568.8 &1505.2 \\
		jamesbond &445.0 &1120.0 &320.0 &628.3 &430.4 \\
		kangaroo &620.0 &600.0 &400.0 &540.0 &121.7 \\
		krull &2941.2 &2334.7 &3217.9 &2831.3 &451.7 \\
		kung\_fu\_master &7880.0 &23640.0 &15590.0 &15703.3 &7880.6 \\
		ms\_pacman &1448.0 &3252.0 &1308.0 &2002.7 &1084.2 \\
		pong &2.4 &-17.7 &-20.8 &-12.0 &12.6 \\
		private\_eye &0.0 &100.0 &100.0 &66.7 &57.7 \\
		qbert &545.0 &440.0 &765.0 &583.3 &165.9 \\
		road\_runner &11520.0 &16150.0 &12170.0 &13280.0 &2506.7 \\
		seaquest &740.0 &718.0 &556.0 &671.3 &100.5 \\
		up\_n\_down &5389.0 &2957.0 &4361.0 &4235.7 &1220.8 \\
		\bottomrule
	\end{tabular}
\end{table}

\begin{table}[!htp]\centering
	\caption{REN-J}\label{tab:detail-REN-J}
	\scriptsize
	\begin{tabular}{lrrrrrr}\toprule
		&\multicolumn{3}{c}{\textbf{Runs}} &\multirow{2}{*}{\textbf{Mean}} &\multirow{2}{*}{\textbf{Std}} \\\cmidrule{2-4}
		&\textbf{1} &\textbf{2} &\textbf{3} & & \\\midrule
		alien &954.0 &864.0 &583.0 &800.3 &193.5 \\
		amidar &96.7 &127.6 &136.2 &120.2 &20.8 \\
		assault &413.7 &670.0 &428.4 &504.0 &143.9 \\
		asterix &685.0 &710.0 &540.0 &645.0 &91.8 \\
		bank\_heist &75.0 &89.0 &30.0 &64.7 &30.8 \\
		battle\_zone &5200.0 &10600.0 &22200.0 &12666.7 &8686.4 \\
		boxing &8.4 &4.8 &2.3 &5.2 &3.1 \\
		breakout &3.1 &2.3 &2.8 &2.7 &0.4 \\
		chopper\_command &1080.0 &660.0 &1200.0 &980.0 &283.5 \\
		crazy\_climber &24620.0 &18930.0 &27290.0 &23613.3 &4269.9 \\
		demon\_attack &534.0 &555.0 &907.5 &665.5 &209.8 \\
		freeway &22.8 &25.4 &25.3 &24.5 &1.5 \\
		frostbite &1880.0 &2511.0 &2463.0 &2284.7 &351.3 \\
		gopher &780.0 &480.0 &304.0 &521.3 &240.7 \\
		hero &7384.5 &7223.0 &4890.5 &6499.3 &1395.6 \\
		jamesbond &295.0 &325.0 &210.0 &276.7 &59.7 \\
		kangaroo &660.0 &1260.0 &760.0 &893.3 &321.5 \\
		krull &2467.7 &2651.1 &2882.7 &2667.2 &208.0 \\
		kung\_fu\_master &18980.0 &6330.0 &3540.0 &9616.7 &8228.0 \\
		ms\_pacman &479.0 &1108.0 &2135.0 &1240.7 &835.9 \\
		pong &-20.3 &-15.4 &-20.3 &-18.7 &2.8 \\
		private\_eye &0.0 &-205.6 &100.0 &-35.2 &155.8 \\
		qbert &5635.0 &905.0 &710.0 &2416.7 &2788.9 \\
		road\_runner &9940.0 &4780.0 &2310.0 &5676.7 &3893.2 \\
		seaquest &344.0 &726.0 &596.0 &555.3 &194.2 \\
		up\_n\_down &3755.0 &3087.0 &3322.0 &3388.0 &338.9 \\
		\bottomrule
	\end{tabular}
\end{table}

\begin{table}[!htp]\centering
	\caption{RENAULT-all}\label{tab:detail-RENAULT-all}
	\scriptsize
	\begin{tabular}{lrrrrrr}\toprule
		&\multicolumn{3}{c}{\textbf{Runs}} &\multirow{2}{*}{\textbf{Mean}} &\multirow{2}{*}{\textbf{Std}} \\\cmidrule{2-4}
		&\textbf{1} &\textbf{2} &\textbf{3} & & \\\midrule
		alien &1193.0 &704.0 &773.0 &890.0 &264.7 \\
		amidar &121.4 &138.8 &151.4 &137.2 &15.1 \\
		assault &473.4 &589.0 &512.4 &524.9 &58.8 \\
		asterix &460.0 &670.0 &430.0 &520.0 &130.8 \\
		bank\_heist &90.0 &147.0 &40.0 &92.3 &53.5 \\
		battle\_zone &13500.0 &9300.0 &4200.0 &9000.0 &4657.3 \\
		boxing &4.4 &0.8 &9.6 &4.9 &4.4 \\
		breakout &1.8 &4.4 &2.7 &3.0 &1.3 \\
		chopper\_command &820.0 &270.0 &600.0 &563.3 &276.8 \\
		crazy\_climber &23610.0 &13220.0 &29540.0 &22123.3 &8260.9 \\
		demon\_attack &594.0 &732.0 &1142.0 &822.7 &285.0 \\
		freeway &29.7 &30.0 &28.4 &29.4 &0.9 \\
		frostbite &2250.0 &634.0 &617.0 &1167.0 &937.9 \\
		gopher &284.0 &398.0 &288.0 &323.3 &64.7 \\
		hero &6856.5 &7560.0 &7365.0 &7260.5 &363.2 \\
		jamesbond &505.0 &340.0 &415.0 &420.0 &82.6 \\
		kangaroo &700.0 &1220.0 &600.0 &840.0 &332.9 \\
		krull &5018.4 &3045.0 &3417.6 &3827.0 &1048.5 \\
		kung\_fu\_master &7190.0 &15480.0 &17600.0 &13423.3 &5501.3 \\
		ms\_pacman &1857.0 &1217.0 &2041.0 &1705.0 &432.5 \\
		pong &-12.3 &-7.3 &-12.9 &-10.8 &3.1 \\
		private\_eye &100.0 &100.0 &100.0 &100.0 &0.0 \\
		qbert &787.5 &492.5 &1762.5 &1014.2 &664.6 \\
		road\_runner &6160.0 &6470.0 &10020.0 &7550.0 &2144.7 \\
		seaquest &398.0 &412.0 &352.0 &387.3 &31.4 \\
		up\_n\_down &3623.0 &4113.0 &2641.0 &3459.0 &749.6 \\
		\bottomrule
	\end{tabular}
\end{table}

\begin{table}[!htp]\centering
	\caption{RENAULT-noID}\label{tab:detail-RENAULT-noID}
	\scriptsize
	\begin{tabular}{lrrrrrr}\toprule
		&\multicolumn{3}{c}{\textbf{Runs}} &\multirow{2}{*}{\textbf{Mean}} &\multirow{2}{*}{\textbf{Std}} \\\cmidrule{2-4}
		&\textbf{1} &\textbf{2} &\textbf{3} & & \\\midrule
		alien &1189.0 &825.0 &964.0 &992.7 &183.7 \\
		amidar &241.8 &165.3 &235.6 &214.2 &42.5 \\
		assault &618.4 &579.5 &585.8 &594.6 &20.9 \\
		asterix &665.0 &590.0 &495.0 &583.3 &85.2 \\
		bank\_heist &52.0 &462.0 &18.0 &177.3 &247.1 \\
		battle\_zone &17100.0 &20100.0 &21700.0 &19633.3 &2335.2 \\
		boxing &1.9 &6.1 &14.9 &7.6 &6.6 \\
		breakout &4.1 &6.1 &2.6 &4.3 &1.8 \\
		chopper\_command &1080.0 &950.0 &1240.0 &1090.0 &145.3 \\
		crazy\_climber &31200.0 &26170.0 &3400.0 &20256.7 &14813.4 \\
		demon\_attack &462.5 &479.5 &355.5 &432.5 &67.2 \\
		freeway &29.0 &30.3 &27.9 &29.1 &1.2 \\
		frostbite &355.0 &2484.0 &507.0 &1115.3 &1187.7 \\
		gopher &448.0 &338.0 &388.0 &391.3 &55.1 \\
		hero &7459.0 &7184.5 &7356.5 &7333.3 &138.7 \\
		jamesbond &425.0 &245.0 &480.0 &383.3 &122.9 \\
		kangaroo &660.0 &600.0 &1520.0 &926.7 &514.7 \\
		krull &3021.4 &3072.3 &3526.8 &3206.8 &278.3 \\
		kung\_fu\_master &23890.0 &33540.0 &4000.0 &20476.7 &15062.9 \\
		ms\_pacman &597.0 &1071.0 &1042.0 &903.3 &265.7 \\
		pong &-16.1 &-17.7 &-15.3 &-16.4 &1.2 \\
		private\_eye &100.0 &0.0 &100.0 &66.7 &57.7 \\
		qbert &2747.5 &870.0 &727.5 &1448.3 &1127.4 \\
		road\_runner &4510.0 &11550.0 &7210.0 &7756.7 &3551.7 \\
		seaquest &298.0 &744.0 &398.0 &480.0 &234.0 \\
		up\_n\_down &2678.0 &3451.0 &2871.0 &3000.0 &402.3 \\
		\bottomrule
	\end{tabular}
\end{table}

\begin{table}[!htp]\centering
	\caption{Auxiliary Tasks. Results are averaged across 3 runs.}\label{tab:detail-Aux}
	\scriptsize
	\begin{tabular}{lrrrrrr}\toprule
		&\textbf{NS} &\textbf{ID} &\textbf{RF} &\textbf{CI} &\textbf{CM} \\\midrule
		alien &1003.0 &1070.0 &824.0 &914.3 &820.3 \\
		amidar &129.5 &111.8 &142.0 &128.1 &94.2 \\
		assault &586.5 &502.2 &594.5 &480.2 &538.3 \\
		asterix &606.7 &506.7 &463.3 &548.3 &671.7 \\
		bank\_heist &108.7 &76.7 &91.0 &59.3 &90.3 \\
		battle\_zone &15000.0 &11433.3 &15400.0 &17700.0 &16766.7 \\
		boxing &-0.6 &1.8 &5.0 &3.1 &-0.3 \\
		breakout &3.2 &5.2 &3.9 &2.9 &3.9 \\
		chopper\_command &743.3 &743.3 &896.7 &723.3 &826.7 \\
		crazy\_climber &14273.3 &15073.3 &23663.3 &23933.3 &19923.3 \\
		demon\_attack &561.3 &577.7 &401.8 &461.7 &517.5 \\
		freeway &26.9 &27.3 &26.4 &27.3 &27.7 \\
		frostbite &432.7 &357.0 &480.7 &889.0 &779.0 \\
		gopher &342.7 &318.0 &376.7 &409.3 &369.3 \\
		hero &5639.5 &5522.3 &5641.2 &3980.7 &4770.2 \\
		jamesbond &310.0 &250.0 &380.0 &293.3 &351.7 \\
		kangaroo &493.3 &206.7 &466.7 &533.3 &526.7 \\
		krull &3209.0 &3240.7 &3422.1 &2735.6 &2623.4 \\
		kung\_fu\_master &9216.7 &6850.0 &13670.0 &6893.3 &11923.3 \\
		ms\_pacman &1405.0 &1036.7 &1044.7 &1820.7 &1236.0 \\
		pong &-18.3 &-9.0 &-11.7 &-13.9 &-16.3 \\
		private\_eye &-47.4 &-29.3 &66.7 &-276.2 &-39.6 \\
		qbert &935.8 &440.8 &970.0 &876.7 &509.2 \\
		road\_runner &12023.3 &14050.0 &12443.3 &13460.0 &11703.3 \\
		seaquest &517.3 &552.0 &622.0 &516.7 &503.3 \\
		up\_n\_down &3505.3 &3239.3 &5118.3 &3325.7 &3068.3 \\
		\bottomrule
	\end{tabular}
\end{table}
	
\end{document}